\documentclass[11pt]{article}

\usepackage{arxiv}
\usepackage{booktabs}
\usepackage{tabularx}
\usepackage{array}
\usepackage{amsmath,amssymb,amsthm}
\usepackage{graphicx}
\usepackage{url}
\graphicspath{{figures/rendered/}{../figures/rendered/}}
\usepackage{microtype}
\usepackage{placeins}
\usepackage{float}

\newtheorem{theorem}{Theorem}
\newtheorem{proposition}[theorem]{Proposition}
\newtheorem{corollary}[theorem]{Corollary}

\title{Kernelized Parameter Sensitivity Atlases for Amortized Conditional Interpretability}
\author{
Davide Wiest$^{1}$ \quad Ahanaf Hasan Ariq$^{2}$ \quad Abdul Monaf Chowdhury$^{3}$\\[0.5em]
{\small $^{1}$Technische Universit\"at Darmstadt \quad
$^{2}$Daffodil International School \quad
$^{3}$University of Dhaka}
}
\date{}
\renewcommand{\shorttitle}{Kernelized Parameter Sensitivity Atlases for Amortized Conditional Interpretability}
\begin{document}
\maketitle

\begin{abstract}
Gradient-based attribution provides input-specific information about model parameters, but computing it via backpropagation for many samples is expensive, while static importance scores discard input dependence.
We derive KPSA (Kernelized Parameter Sensitivity Atlas) from a block decomposition of the empirical neural tangent kernel. It measures normalized parameter-group sensitivity on a fixed source set and functional, embeds those sources in an explicit query space, and uses a lightweight kernel to estimate sensitivities for unseen inputs.
We characterize query-space fit through target stability and derive bounds that separate finite-source from irreducible representation error and give margin-based top-$k$ recovery.
This paper studies two interpretability applications.
Firstly, KPSA identifies parameter groups with high susceptibility to perturbations across vision, forecasting, and protein-modeling setups. Across a fivefold range of parameter budgets ($0.02$--$0.1\%$), its selected groups attain $39$--$85\%$ of the perturbation susceptibility of a first-order backpropagation oracle. In vision and forecasting, these parameter groups form sparse sensitivity circuits that extend across depth and vary systematically with the input.
Secondly, we use KPSA on an activation-attribution problem for MLP features in transformers. It retains $81\%$ of the gap between a backpropagation oracle and a constant-sensitivity baseline on vision while increasing batched query-time throughput from 551 to 2,041 images/s and reducing peak GPU memory from 4.55 to 0.44~GB. On the protein-modeling task, it recovers $68\%$ of this gap.
An ablation of KPSA's kernel aggregation shows that nearest--neighbor retrieval in KPSA space can remain competitive.
The empirics show that compact kernel atlases can store useful input-dependent parameter sensitivity while amortizing repeated gradient attribution, a step toward scalable conditional interpretability.
\par\smallskip\noindent\textbf{Code:} \url{https://github.com/Axym-Labs/kpsa}
\end{abstract}

\begin{figure}[H]
  \centering
  \includegraphics[width=\linewidth]{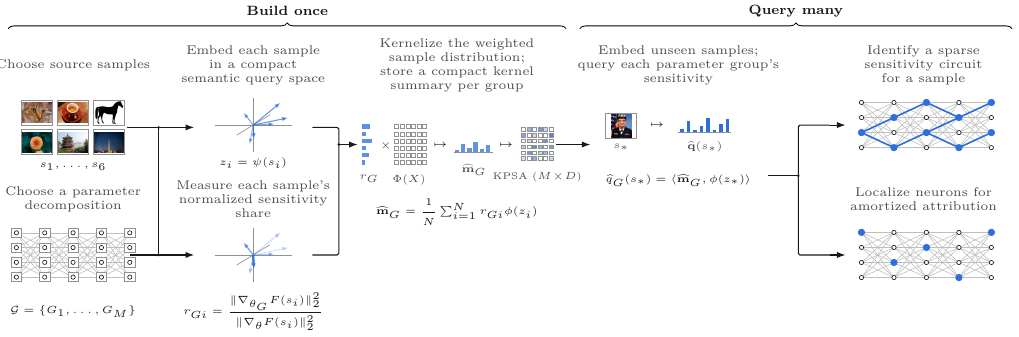}
  \caption{KPSA builds a compact atlas once by pairing source representations with normalized parameter-group sensitivities, then estimates input-conditioned sensitivity profiles for circuit discovery and attribution without per-query backpropagation.}
  \label{fig:kpsa-overview}
\end{figure}

\section{Introduction}
\label{sec:introduction}

Gradient-based attribution answers an intrinsically conditional question:
which parameters or internal features matter for this input and this model
functional?  A fresh backward pass gives a high-resolution answer, but its
memory and latency costs recur for every query.  At the opposite extreme, a
single global importance vector is cheap but erases input dependence.  This
paper asks whether the repeated-query problem can instead be posed as
statistical estimation: from a finite source set with measured gradients,
estimate the sensitivity profile of an unseen query using only a compact
representation and a forward computation.

Let a frozen scalar functional $F_\theta(x)$ induce an empirical neural
tangent kernel (NTK) over a partition of the parameters.  The NTK decomposes
exactly into parameter-block kernels, and their normalized diagonal
contributions form a sensitivity profile $s(x)$.  Given a query representation
$Z=\psi(X)$, the population object available without a query gradient is not
generally the realized profile $s(x)$, but the conditional
mean
\begin{equation*}
  m_\psi(z)=\mathbb E[s(X)\mid Z=z].
\end{equation*}
This distinction is consequential.  More source gradients can reduce
estimation error for $m_\psi$, but cannot recover sensitivity variation that
$\psi$ discards.  It therefore rules out a source-count-only guarantee to the
realized query gradient unless the representation is sufficient for the
profile.

We introduce KPSA, a Kernelized Parameter Sensitivity Atlas.  KPSA measures
the block-NTK profiles of a fixed source set, embeds the sources in a
categorical or learned query space, and transfers their profiles to unseen
inputs using a nonnegative retrieval kernel.  Vision uses a Nystr\"om kernel
approximation, while forecasting and protein modeling evaluate the kernel
against all sources.  Normalizing the retrieval weights does not change group
rankings, so the rule is a vector-valued Nadaraya--Watson estimator
\cite{nadaraya1964estimating,watson1964smooth} of an NTK-derived target.

The theory starts from the exact block decomposition of the empirical NTK.
Each diagonal block equals expected squared first-order susceptibility to a
random perturbation of that parameter group, so its normalized share is the
quantity KPSA estimates.  The analysis then characterizes query-space fit
through target stability: nearby representations should have similar
sensitivity profiles.  A finite-source bound separates local instability,
effective source count, and irreducible representation error, and a margin
condition translates profile error into top-$k$ recovery.  A separate
smoothness result relates the first-order target to the finite perturbation
assay used in the experiments.

Empirically, we test this estimator in two interpretability uses and three
modalities.  First, KPSA retrieves sparse parameter-influence sets whose exact
antithetic perturbations remain input dependent in vision, forecasting, and
protein models.  At the $0.1\%$ budget, semantic KPSA recovers
$35\%$, $47\%$, $50\%$, and $33\%$ of the static-to-direct-gradient causal
gap on ImageNet, ETTh1, Weather, and Swiss-Prot, respectively.  Second, it
localizes MLP features for exact zero-deactivation: it recovers $81\%$ of the
vision causal gap at the efficiency operating point and $68\%$ on protein
modeling.  The vision query path runs at 2,041 rather than 551 images/s and
uses 0.44 rather than 4.55~GB peak GPU memory.  These are consequences of
replacing per-query reverse-mode differentiation with a forward
representation and kernel retrieval; construction cost is paid once.

Our contributions are:
\begin{itemize}
  \item We derive KPSA from the block decomposition of the empirical NTK:
  normalized block diagonals are shares of expected infinitesimal parameter
  susceptibility and define the atlas target.
  \item We characterize query-space suitability through sensitivity-target
  stability, separating finite-source, local-smoothing, and irreducible
  representation errors.  These bounds yield top-$k$ recovery conditions and
  a smoothness bridge to finite perturbations.  Complete proofs are in
  Appendix~\ref{app:theory}.
  \item Without changing the frozen empirical protocol, we evaluate the same
  estimator across vision, forecasting, and protein modeling, measuring
  profile fidelity, exact perturbation and deactivation effects, circuit
  structure, storage, memory, and throughput.
\end{itemize}
\section{Related Work}
\label{sec:related-work}

\paragraph{Nonparametric regression and kernel approximation.}
The KPSA query rule is local-constant kernel regression
\cite{nadaraya1964estimating,watson1964smooth}; its familiar bias--variance
rate depends on response smoothness and local covariate dimension
\cite{tsybakov2009introduction}.  Nystr\"om and explicit kernel
approximations reduce storage and query cost
\cite{pham2013fast,oglic2017nystrom,chatalic2022nystrom}.  KPSA uses these
tools for an NTK-derived, simplex-valued response whose coordinates are native
parameter groups.  Its analysis specializes standard local-regression
reasoning to query-space target stability and top-$k$ parameter selection
rather than proposing a new generic kernel-regression rate.

\paragraph{Gradient geometry and compact task representations.}
The neural tangent kernel is an inner product of parameter-gradient features
\cite{jacot2018neural}; Fisher kernels and empirical Fisher summaries likewise
encode gradient geometry \cite{jaakkola1999exploiting,zhang2022neural}.
Task2Vec aggregates Fisher information into task embeddings
\cite{achille2019task2vec,achille2021information}, while FisherSketch expresses
head-level Fisher alignment through compact kernel mean embeddings
\cite{sweeney2026geometry}.  These methods compare examples, tasks, or update
geometry.  KPSA instead preserves the block-diagonal allocation of a
functional's gradient energy and regresses that allocation in a separate
query space, producing a score for every actionable parameter group.

\paragraph{Influence and amortized attribution.}
Influence functions and gradient-tracing methods estimate training-example
effects \cite{koh2017understanding,pruthi2020estimating}; gradient embeddings
also support active selection \cite{ash2020deep}.  Learned amortizers such as
FastSHAP replace repeated attribution optimization with a predictor
\cite{jethani2022fastshap}.  KPSA is training-free after source profiling: it
stores source block-NTK profiles and uses a fixed kernel retrieval rule, with
an explicit target and finite-source error decomposition.

\paragraph{Neuron and circuit localization.}
Prior work localizes task- or fact-relevant neurons and tests them through
intervention \cite{dai2022knowledge,song2024large,leng2025towards,
wang2026heads}.  Automated circuit discovery and attribution patching search
over model components \cite{conmy2023automated,syed2024attribution}, with
recent benchmarks emphasizing causal faithfulness
\cite{mueller2025mib,marks2025sparse}.  Our use of \emph{circuit} is narrower:
an input-conditioned set of native parameter groups with measured local
causal utility, not a signed directed computational graph.  The contribution
is amortizing its query-specific localization and characterizing when the
retrieved top groups match a first-order oracle.
\section{Kernelized Parameter Sensitivity Atlases}
\label{sec:theory}

\subsection{From block NTKs to parameter sensitivity}

Fix a differentiable scalar functional $F_\theta:\mathcal X\to\mathbb R$ of
a frozen model and a disjoint partition
$\mathcal P=\{G_1,\ldots,G_M\}$ of the profiled parameters.  Write
$g_m(x)=\nabla_{\theta_{G_m}}F_\theta(x)$ and define the blockwise empirical
neural tangent kernels
\begin{equation}
 \begin{aligned}
 K_m(x,x')&=\langle g_m(x),g_m(x')\rangle,\\
 K_{\mathcal P}(x,x')&=\sum_{m=1}^M K_m(x,x').
 \end{aligned}
 \label{eq:block-ntk}
\end{equation}
The equality in Equation~\ref{eq:block-ntk} follows exactly from the disjoint
parameter partition.  On the diagonal,
$K_m(x,x)=\|g_m(x)\|_2^2$ and
$K_{\mathcal P}(x,x)=\sum_m\|g_m(x)\|_2^2$.  KPSA is derived by normalizing
this diagonal decomposition:
\begin{equation}
 s_m(x)=\frac{K_m(x,x)}{K_{\mathcal P}(x,x)},
 \qquad
 s(x)=(s_1(x),\ldots,s_M(x)).
 \label{eq:kpsa-profile}
\end{equation}
Whenever $K_{\mathcal P}(x,x)>0$, $s(x)$ lies in the probability simplex
$\Delta_M=\{u\in\mathbb R_+^M:\|u\|_1=1\}$.  If an application stores only
an actionable subset of the profiled blocks, the retained coordinates form a
sub-probability vector.  The normalization preserves where sensitivity lies
while discarding the input-dependent total gradient scale.

\begin{proposition}[Block-NTK sensitivity]
\label{prop:block-ntk}
Let $\xi_m$ have independent Rademacher coordinates on $G_m$ and zeros
elsewhere.  Then
\begin{equation}
 \mathbb E_{\xi_m}
 \left[\bigl(g_m(x)^\top\xi_m\bigr)^2\right]
 =K_m(x,x).
 \label{eq:block-susceptibility}
\end{equation}
Thus $s_m(x)$ is block $m$'s share of expected infinitesimal perturbation
susceptibility.  For equal-mass blocks, the top-$k$ coordinates of $s(x)$
maximize captured first-order susceptibility among all $k$-block selections.
\end{proposition}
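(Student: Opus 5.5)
The plan is to prove the identity in Equation~\ref{eq:block-susceptibility} by direct second-moment computation, and then read off the share interpretation and the top-$k$ optimality from it. Write $\xi_m=(\xi_{m,j})_{j\in G_m}$ with the coordinates indexed by the parameters in $G_m$; since $\xi_m$ vanishes outside $G_m$, $g_m(x)^\top\xi_m=\sum_{j\in G_m}g_{m,j}(x)\,\xi_{m,j}$. Expand the square as a double sum and use linearity of expectation together with $\mathbb E[\xi_{m,i}\xi_{m,j}]=\delta_{ij}$. This moment identity holds because Rademacher coordinates are independent with mean zero and $\xi_{m,j}^2=1$, so that $\mathbb E[\xi_m\xi_m^\top]=I_{G_m}$. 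The expectation therefore collapses to $g_m(x)^\top I\,g_m(x)=\|g_m(x)\|_2^2=K_m(x,x)$ by the diagonal remark after Equation~\ref{eq:block-ntk}. Here $g_m(x)$ is deterministic given $x$ and the frozen $\theta$, so no interchange issues arise.

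For the share interpretation, I would note that the first-order change of $F_\theta(x)$ under the perturbation $\theta\mapsto\theta+\varepsilon\xi_m$ is $\varepsilon\,g_m(x)^\top\xi_m+o(\varepsilon)$. Hence $K_m(x,x)$ is the expected squared infinitesimal susceptibility of block $m$. Summing over $m$ and using the exact decomposition $K_{\mathcal P}(x,x)=\sum_m K_m(x,x)$ shows that $s_m(x)$ is precisely block $m$'s fraction of the total, assuming $K_{\mathcal P}(x,x)>0$. The same computation with $\xi=\sum_m\xi_m$ also gives $\mathbb E[(\nabla F^\top\xi)^2]=K_{\mathcal P}(x,x)$, because cross-block terms vanish by independence across blocks.

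For the top-$k$ claim, I would first fix the objective. For a selection $S\subseteq\{1,\dots,M\}$ with $|S|=k$, perturb the union of the blocks in $S$ with $\xi_S=\sum_{m\in S}\xi_m$. By the same moment identity, the captured susceptibility is $\mathbb E[(g^\top\xi_S)^2]=\sum_{m\in S}K_m(x,x)=K_{\mathcal P}(x,x)\sum_{m\in S}s_m(x)$. Since all blocks have equal mass, every $k$-subset uses the same parameter budget, so the comparison reduces to maximizing a sum of $k$ nonnegative numbers $s_m(x)$. An exchange argument shows the $k$ largest coordinates are optimal: if $S$ omits an index $m'$ with $s_{m'}$ larger than some $s_m$, $m\in S$, then swapping them does not decrease the sum. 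Ties give non-unique maximizers, so the claim should be read as ``a top-$k$ set attains the maximum.''

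The main obstacle is definitional rather than technical: I must pin down what ``captured first-order susceptibility'' and ``equal-mass'' mean. I would take mass to be the parameter count $|G_m|$, which is exactly what makes budgets comparable. Under that convention additivity is immediate, and everything else is routine.
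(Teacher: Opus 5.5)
Your proposal is correct and follows essentially the same route as the paper: the identity comes from $\mathbb E[\xi_m\xi_m^\top]=I$ (independent, zero-mean, unit-variance coordinates), the share interpretation from the directional derivative $g_m(x)^\top\xi_m$, and the top-$k$ claim from maximizing a sum of nonnegative block quantities. Two of your steps add precision without changing the argument: you perturb the union with $\xi_S$ and show the cross-block terms vanish, so captured susceptibility is additive, and you note that ties make the maximizer non-unique; the paper's proof leaves both implicit.
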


This proposition is the theoretical starting point of KPSA: the atlas target
is not an arbitrary importance vector, but the normalized diagonal allocation
of the empirical NTK across native parameter blocks.

\subsection{Retrieval in a query space}

Let $Z=\psi(X)$ be a categorical or learned query representation.  The
profile predictable without a query backward pass is
\begin{equation}
 m_\psi(z)=\mathbb E[s(X)\mid Z=z].
 \label{eq:kpsa-target}
\end{equation}
It equals the realized profile $s(x)$ only when the representation retains all
profile-relevant variation.

This observation gives a direct measure of whether a query space is suitable
for sensitivity retrieval.  Define its irreducible sensitivity risk by
\begin{equation}
 \mathcal I_\psi
 =\mathbb E\!\left[
   \|s(X)-m_\psi(\psi(X))\|_2^2
 \right].
 \label{eq:representation-risk}
\end{equation}

\begin{proposition}[Query-space sensitivity sufficiency]
\label{prop:query-sufficiency}
For every square-integrable predictor $f:\mathcal Z\to\mathbb R^M$,
\begin{equation}
 \mathbb E\|s(X)-f(Z)\|_2^2
 =\mathcal I_\psi
 +\mathbb E\|m_\psi(Z)-f(Z)\|_2^2.
 \label{eq:representation-decomposition}
\end{equation}
Consequently, no method using only $Z$ can have smaller mean-squared profile
error than $\mathcal I_\psi$, and $\mathcal I_\psi=0$ exactly when the
block-NTK sensitivity profile is determined by the representation almost
surely.
\end{proposition}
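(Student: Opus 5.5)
The plan is the standard Pythagorean (orthogonal projection) argument for conditional expectation in $L^2$. The first step is to check that every term is well defined. Each $s(X)$ lies in the simplex $\Delta_M$ (or in a sub-probability vector), so $\|s(X)\|_2\le 1$ and $s(X)$ is square-integrable. Hence $m_\psi(Z)=\mathbb E[s(X)\mid Z]$ exists, is a $\sigma(Z)$-measurable version, and satisfies $\|m_\psi(Z)\|_2\le 1$ by Jensen. I will assume $K_{\mathcal P}(X,X)>0$ almost surely, as the definition of $s$ requires. With $f(Z)$ square-integrable by hypothesis, all three expectations in Equation~\ref{eq:representation-decomposition} are finite.

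Next I write
\[
s(X)-f(Z)=\bigl(s(X)-m_\psi(Z)\bigr)+\bigl(m_\psi(Z)-f(Z)\bigr)
\]
and expand the squared norm. This gives $\mathcal I_\psi$, the second term on the right of Equation~\ref{eq:representation-decomposition}, and the cross term
\[
2\,\mathbb E\bigl\langle s(X)-m_\psi(Z),\,m_\psi(Z)-f(Z)\bigr\rangle .
\]
To kill the cross term, I set $h(Z)=m_\psi(Z)-f(Z)$, which is $\sigma(Z)$-measurable and square-integrable. By the tower property,
\[
\mathbb E\langle s(X)-m_\psi(Z),h(Z)\rangle=\mathbb E\bigl\langle \mathbb E[s(X)-m_\psi(Z)\mid Z],h(Z)\bigr\rangle .
\]
The inner conditional expectation vanishes. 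The step needing care is integrability of the product, so that pulling $h(Z)$ out of the conditional expectation is legitimate. Cauchy--Schwarz handles it: both factors are in $L^2$, so their inner product is in $L^1$. This is the only real obstacle, and it is mild.

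The consequences then follow directly. The second term is nonnegative, so every $Z$-measurable predictor has risk at least $\mathcal I_\psi$, with equality iff $f(Z)=m_\psi(Z)$ almost surely. For the sufficiency characterization, $\mathcal I_\psi=0$ iff $s(X)=m_\psi(\psi(X))$ almost surely. In that case $s(X)$ is a.s.\ a measurable function of $Z$, i.e.\ the profile is determined by the representation. Conversely, if $s(X)=\phi(Z)$ a.s.\ for some measurable $\phi$, then $m_\psi(Z)=\mathbb E[\phi(Z)\mid Z]=\phi(Z)$ a.s., so $\mathcal I_\psi=0$. I will note that ``method using only $Z$'' means a measurable, square-integrable $f$. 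A non-square-integrable $f$ has infinite risk, since $s$ is bounded, so the lower bound holds trivially for it.
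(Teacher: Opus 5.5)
Your proposal is correct and follows the paper's proof essentially verbatim: you expand the square around $m_\psi(Z)$, kill the cross term by conditioning on $Z$, and read off both the lower bound and the $\mathcal I_\psi=0$ characterization. Your added details are ones the paper leaves implicit: the integrability checks (boundedness of $s$, Cauchy--Schwarz for the cross term, infinite risk for non-square-integrable $f$) and the explicit converse via a measurable $\phi$.
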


Equation~\ref{eq:representation-decomposition} separates the choice of query
space from the choice of retrieval rule.  A representation can fail because
it merges inputs with different parameter-sensitivity profiles, even if the
kernel estimator has infinitely many sources.  Among representations with
small $\mathcal I_\psi$, the remaining question is whether the conditional
target varies smoothly enough for local retrieval.

Given source pairs $(z_i,s_i)_{i=1}^N$, with
$z_i=\psi(x_i)$ and $s_i=s(x_i)$, let $\kappa_h$ be a nonnegative retrieval
kernel.  Define
\begin{equation}
 \lambda_i(z)=
 \frac{\kappa_h(z_i,z)}{\sum_{j=1}^N\kappa_h(z_j,z)},
 \qquad
 \widehat m(z)=\sum_{i=1}^N\lambda_i(z)s_i.
 \label{eq:kpsa-estimator}
\end{equation}
KPSA ranks the corresponding unnormalized scores
$\sum_i\kappa_h(z_i,z)s_i$; the common positive denominator in
Equation~\ref{eq:kpsa-estimator} does not change their ordering.  The vision
experiments use a Nystr\"om approximation to the retrieval kernel, whereas
forecasting and protein modeling evaluate the kernel against all source
representations.  The retrieval kernel $\kappa_h$ is distinct from the model
NTK $K_{\mathcal P}$: the latter defines the sensitivity target, while the
former transfers that target between inputs.

The local fit of a query space is governed by two corresponding quantities.
Its pointwise irreducible residual is
\begin{equation*}
 \varepsilon_\psi(x)=\|s(x)-m_\psi(\psi(x))\|_\infty,
\end{equation*}
and its target stability is expressed by the H\"older condition
\begin{equation}
 \|m_\psi(z)-m_\psi(z')\|_2
 \leq Ld(z,z')^\alpha,
 \qquad 0<\alpha\leq1.
 \label{eq:target-stability}
\end{equation}
A useful representation places inputs with similar block-NTK profiles near
one another: this reduces both local variation and the information lost by
conditioning on $Z$.

\paragraph{Assumptions.}
(A1) $F_\theta$, $\mathcal P$, $\psi$, and $\kappa_h$ are fixed, and the
source pairs are independent draws from one population.  (A2)
$K_{\mathcal P}(X,X)>0$ almost surely and $0\leq\kappa_h\leq1$.  (A3)
Equation~\ref{eq:target-stability} holds in the query metric.  The i.i.d. rate
below is not invoked for the fixed, overlapping forecasting windows; the
deterministic ranking statements still apply there.

\subsection{Finite-source error and query-space fit}

For a query $z$, define its effective source count and weighted radius by
\begin{equation}
 n_{\mathrm{eff}}(z)=\frac{1}{\sum_i\lambda_i(z)^2},
 \qquad
 \rho_\alpha(z)=\sum_i\lambda_i(z)d(z_i,z)^\alpha.
 \label{eq:neighborhood-diagnostics}
\end{equation}
Both are observable from the source representations and query.

\begin{theorem}[Finite-source sensitivity gap]
\label{thm:truth-gap}
Under A1--A3, fix $x_*$ with $z_*=\psi(x_*)$ and nonzero kernel mass.
Conditional on the source representations, with probability at least
$1-\delta$ over their profiles,
\begin{equation}
 \begin{aligned}
 \|\widehat m(z_*)-s(x_*)\|_\infty
 &\leq \varepsilon_\psi(x_*)+L\rho_\alpha(z_*)\\
 &\quad+
 \frac{1+\sqrt{\log(1/\delta)}}
 {\sqrt{n_{\mathrm{eff}}(z_*)}}.
 \end{aligned}
 \label{eq:truth-gap}
\end{equation}
\end{theorem}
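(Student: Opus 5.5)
We split the error by the triangle inequality into three pieces that match the three terms of the bound:
\[
\widehat m(z_*)-s(x_*)
=\underbrace{\sum_i\lambda_i(z_*)\bigl(s_i-m_\psi(z_i)\bigr)}_{\text{noise}}
+\underbrace{\sum_i\lambda_i(z_*)\bigl(m_\psi(z_i)-m_\psi(z_*)\bigr)}_{\text{bias}}
+\underbrace{m_\psi(z_*)-s(x_*)}_{\text{residual}},
\]
which uses $\sum_i\lambda_i=1$; this holds by nonzero kernel mass. The residual term has sup-norm exactly $\varepsilon_\psi(x_*)$ by definition.

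For the bias we use $\|\cdot\|_\infty\le\|\cdot\|_2$, convexity of the norm, and the H\"older condition (A3). This gives
\[
\Bigl\|\sum_i\lambda_i(m_\psi(z_i)-m_\psi(z_*))\Bigr\|_\infty\le\sum_i\lambda_i\,L\,d(z_i,z_*)^\alpha=L\rho_\alpha(z_*).
\]
Both of these steps are deterministic once the source representations are fixed.

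The noise term carries the probability. Conditional on $(z_i)_i$, assumption A1 makes the profiles $s_i$ independent, with $\mathbb E[s_i\mid z_i]=m_\psi(z_i)$. Here we need that conditioning on all source representations reduces to conditioning on $z_i$ alone, which follows from i.i.d.\ pairs. Each coordinate $\eta_{i,m}=s_{i,m}-m_{\psi,m}(z_i)$ is centered. Because $s_i\in\Delta_M$ by (A2), the coordinate $s_{i,m}$ lies in $[0,1]$, so $\eta_{i,m}$ lies in an interval of length $1$. For a fixed coordinate $m$, Hoeffding's inequality applied to $\sum_i\lambda_i\eta_{i,m}$ gives the tail
\[
2\exp\!\Bigl(-2t^2\big/\textstyle\sum_i\lambda_i^2\Bigr)=2\exp\!\bigl(-2t^2 n_{\mathrm{eff}}(z_*)\bigr).
\]

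The main obstacle is controlling the sup over the $M$ coordinates without picking up a $\sqrt{\log M}$ factor, since the stated bound has none. We would avoid a union bound by using the simplex structure instead. The vector noise $W=\sum_i\lambda_i\eta_i$ is a weighted sum of independent, centered random vectors, each with $\|\eta_i\|_2\le\sqrt2$ because it is the difference of two points of the simplex. We bound the sup norm by the $\ell_2$ norm and apply a vector (bounded-differences / McDiarmid) argument in two parts. For the mean,
\[
\mathbb E\|W\|_2\le\sqrt{\mathbb E\|W\|_2^2}=\sqrt{\textstyle\sum_i\lambda_i^2\,\mathbb E\|\eta_i\|_2^2}.
\]
Since $\mathbb E\|\eta_i\|_2^2\le\mathbb E\|s_i\|_2^2\le1$ (variance is at most the second moment, and simplex vectors have $\|s\|_2\le1$), this yields $\mathbb E\|W\|_2\le 1/\sqrt{n_{\mathrm{eff}}}$, which is the ``$1$'' term. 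For the deviation, $\|W\|_2$ is a function of the independent $s_i$, and changing one $s_i$ moves it by at most $\lambda_i\|s_i-s_i'\|_2\le\sqrt2\,\lambda_i$. McDiarmid then gives
\[
\Pr\bigl(\|W\|_2-\mathbb E\|W\|_2>t\bigr)\le\exp\!\bigl(-2t^2/(2\textstyle\sum_i\lambda_i^2)\bigr)=\exp\!\bigl(-t^2 n_{\mathrm{eff}}\bigr).
\]
Setting $t=\sqrt{\log(1/\delta)/n_{\mathrm{eff}}}$ produces the $\sqrt{\log(1/\delta)}/\sqrt{n_{\mathrm{eff}}}$ term with probability at least $1-\delta$. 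Summing the three bounds gives the stated inequality. The constants should be checked carefully, but the $\ell_2$ route is what removes the dependence on $M$.
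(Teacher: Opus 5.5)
Your proposal is correct and follows essentially the same route as the paper. It uses the same noise/bias/residual split, the second-moment bound giving $\mathbb E\|W\|_2\le n_{\mathrm{eff}}(z_*)^{-1/2}$, conditional McDiarmid with bounded differences $\sqrt2\,\lambda_i$, and $\|\cdot\|_\infty\le\|\cdot\|_2$ to avoid any $\log M$ factor. Your constants check out (the tail is $\exp(-t^2 n_{\mathrm{eff}})$). The coordinatewise Hoeffding step you sketched first is simply unnecessary.
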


The three terms in Equation~\ref{eq:truth-gap} are, respectively,
representation error, local target-instability error, and finite-source
error.  More sources reduce only the final term.  A smaller retrieval radius
reduces the stability term only while enough local sources remain, and no
source count can remove sensitivity variation discarded by $\psi$.  The
stochastic term is independent of the number of parameter groups because
every profile lies in a simplex.

\begin{corollary}[Local consistency and rate]
\label{cor:local-rate}
Suppose the query metric has bounded diameter and the representation
distribution has local small-ball dimension $d$ near $z_*$.  For an RBF
retrieval kernel with $h_N\to0$ and $Nh_N^d\to\infty$,
\begin{equation}
 \|\widehat m(z_*)-m_\psi(z_*)\|_\infty
 =O_p\!\left(h_N^\alpha+(Nh_N^d)^{-1/2}\right).
 \label{eq:local-rate}
\end{equation}
Choosing $h_N\asymp N^{-1/(2\alpha+d)}$ gives the standard local-regression
profile rate $O_p(N^{-\alpha/(2\alpha+d)})$
\cite{tsybakov2009introduction}.  For the fixed bandwidths used in our
experiments, the sampling term shrinks toward a nonzero smoothing-bias floor.
\end{corollary}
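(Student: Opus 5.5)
The plan is to reduce Corollary~\ref{cor:local-rate} to Theorem~\ref{thm:truth-gap}, applied with the realized profile replaced by the conditional target. Rerunning the proof of Theorem~\ref{thm:truth-gap} around $m_\psi(z_*)$ instead of $s(x_*)$ drops the representation term $\varepsilon_\psi(x_*)$. The decomposition is
\begin{equation*}
 \widehat m(z_*)-m_\psi(z_*)=\sum_i\lambda_i(z_*)\bigl(m_\psi(z_i)-m_\psi(z_*)\bigr)+\sum_i\lambda_i(z_*)\bigl(s_i-m_\psi(z_i)\bigr).
\end{equation*}
By (A3), the first term has $\ell_\infty$ norm at most $L\rho_\alpha(z_*)$. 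Conditional on the source representations, the second term is a weighted sum of independent, mean-zero vectors with coordinates in $[-1,1]$, because profiles lie in $\Delta_M$. Hoeffding's inequality therefore bounds it by $O_p(n_{\mathrm{eff}}(z_*)^{-1/2})$. So it suffices to show that, for an RBF kernel $\kappa_h(z_i,z)=\exp(-d(z_i,z)^2/h^2)$ with $h=h_N$, we have $\rho_\alpha(z_*)=O_p(h_N^\alpha)$ and $n_{\mathrm{eff}}(z_*)^{-1}=O_p\bigl((Nh_N^d)^{-1}\bigr)$.

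Both claims reduce to moments of $S_r=\sum_i\kappa_h(z_i,z_*)^r d(z_i,z_*)^{\beta}$. I will read local small-ball dimension $d$ as $c\,t^d\le P(d(Z,z_*)\le t)\le C\,t^d$ for small $t$. Write $D=d(Z,z_*)$.
\begin{itemize}
 \item \emph{Denominator.} Restricting to the event $D\le h$ gives $E\,\kappa_h(Z,z_*)\ge e^{-1}c\,h^d$. Its variance is at most $E\,\kappa_h\le E\kappa_h^{\,}$ itself, so Chebyshev and $Nh_N^d\to\infty$ yield $\sum_i\kappa_h(z_i,z_*)\ge \tfrac12 N e^{-1}c\,h_N^d$ with probability tending to one.
 \item \emph{Second moment.} A layer-cake integral gives $E\,\kappa_h^2\le C' h^d$.
 \item \emph{Weighted radius.} By layer cake, $E[\kappa_h D^\alpha]=\int_0^\infty P\bigl(\kappa_h D^\alpha>u\bigr)\,du\le C''h^{d+\alpha}$, using the substitution $t=h s$ and the Gaussian tail. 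Bounded diameter keeps the small-ball upper bound usable at all scales, with the constant $C$ possibly enlarged, since every probability is at most one.
\end{itemize}
Markov's inequality on the numerators then gives $\sum_i\kappa_h^2=O_p(Nh^d)$ and $\sum_i\kappa_h D^\alpha=O_p(Nh^{d+\alpha})$. Dividing by the denominator, or its square, produces $n_{\mathrm{eff}}^{-1}=O_p((Nh^d)^{-1})$ and $\rho_\alpha=O_p(h^\alpha)$. Combined with the decomposition, this gives Equation~\ref{eq:local-rate}.

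The step I expect to be the main obstacle is the layer-cake bound on $E[\kappa_h D^\alpha]$ together with the matching lower bound on $E\kappa_h$. The small-ball condition is only local, so I will split the integral into $D\le r_0$, where the power law holds, and $D>r_0$. On the far region the kernel is at most $e^{-r_0^2/h^2}$, which is $o(h^{d+\alpha})$, so only the local region contributes at the stated order.

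The stated rate follows by balancing the two terms: $h_N^\alpha=(Nh_N^d)^{-1/2}$ gives $h_N\asymp N^{-1/(2\alpha+d)}$. This choice satisfies $h_N\to0$ and $Nh_N^d=N^{2\alpha/(2\alpha+d)}\to\infty$, so the earlier bounds apply, and the result is $O_p(N^{-\alpha/(2\alpha+d)})$.

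For a fixed bandwidth $h$, $n_{\mathrm{eff}}$ grows linearly in $N$ by the law of large numbers, so the sampling term vanishes. Meanwhile $L\rho_\alpha(z_*)$ converges to the positive constant $L\,E[\kappa_h D^\alpha]/E[\kappa_h]$, which is the smoothing-bias floor.
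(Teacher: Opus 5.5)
Your proposal is correct and follows essentially the same route as the paper: you rerun the bias--variance split of Theorem~\ref{thm:truth-gap} around $m_\psi(z_*)$, lower-bound the kernel mass by $\Omega_p(Nh_N^d)$ using the lower small-ball condition, and bound $\mathbb E[\kappa_h D^\alpha]=O(h_N^{d+\alpha})$ using the upper one, which gives $\rho_\alpha=O_p(h_N^\alpha)$. The differences are minor: you use Chebyshev where the paper applies a binomial Chernoff bound to the count of sources within distance $h$, layer-cake integration where it uses shells, and coordinatewise Hoeffding where it uses the $\ell_2$ bound; also, your separate second-moment bound for $n_{\mathrm{eff}}$ can be dropped, since $0\le\kappa_h\le1$ gives $\sum_i\kappa_i^2\le\sum_i\kappa_i$ and hence $n_{\mathrm{eff}}\ge\sum_i\kappa_i$ directly, as in the paper.
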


\subsection{Normalization and selection consequences}

The block-NTK normalization is exactly invariant to a common positive scaling
of all parameter gradients.  More generally, if nonnegative block energies
are perturbed from $e$ to $\widetilde e=ae+\eta$ with $a>0$ and
$\|\eta\|_1\leq\gamma a\|e\|_1$, $\gamma<1$, then
\begin{equation}
 \left\|\frac{\widetilde e}{\|\widetilde e\|_1}
 -\frac{e}{\|e\|_1}\right\|_1
 \leq\frac{2\gamma}{1-\gamma}.
 \label{eq:normalization-stability}
\end{equation}
Thus small non-common energy error remains controlled, although normalization
cannot repair an unsuitable query representation.

Let $S_k(v)$ be the indices of the $k$ largest coordinates of $v$, and let
$\Delta_k(v)=v_{(k)}-v_{(k+1)}$.

\begin{proposition}[Selection recovery and first-order regret]
\label{prop:selection}
If $\|\widehat v-v\|_\infty\leq b$, then
\begin{equation}
 \sum_{m\in S_k(v)}v_m-
 \sum_{m\in S_k(\widehat v)}v_m\leq2kb.
 \label{eq:selection-regret}
\end{equation}
If additionally $\Delta_k(v)>2b$, then
$S_k(\widehat v)=S_k(v)$.  Combining this proposition with
Theorem~\ref{thm:truth-gap} gives exact recovery of the query-gradient top-$k$
blocks whenever the sensitivity gap is smaller than half the oracle margin.
\end{proposition}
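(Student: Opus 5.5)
The plan is to prove the regret bound by a pairing argument between the two top-$k$ sets, then derive exact recovery from a margin comparison. Throughout I write $S=S_k(v)$ and $\widehat S=S_k(\widehat v)$, with ties broken by a fixed rule. The key observation is that the two index sets enter the regret symmetrically, so the common coordinates cancel. Only the symmetric difference $S\setminus\widehat S$ versus $\widehat S\setminus S$ matters, and both pieces have the same cardinality $r\le k$.

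For the regret bound, I would pair the elements of $S\setminus\widehat S$ with those of $\widehat S\setminus S$ by any bijection $m\mapsto m'$. Since $m'\in\widehat S$ and $m\notin\widehat S$, the definition of $\widehat S$ gives $\widehat v_{m'}\ge\widehat v_m$. Using $\|\widehat v-v\|_\infty\le b$ twice, I then chain
\[
v_m\le\widehat v_m+b\le\widehat v_{m'}+b\le v_{m'}+2b .
\]
Summing over the $r$ pairs shows the regret equals $\sum_{m\in S\setminus\widehat S}v_m-\sum_{m'\in\widehat S\setminus S}v_{m'}\le 2rb\le 2kb$. This is slightly stronger than Equation~\ref{eq:selection-regret}.

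For exact recovery, take any $m\in S$ and $j\notin S$. By the definition of the margin, $v_m\ge v_{(k)}$ and $v_j\le v_{(k+1)}$, so
\[
\widehat v_m-\widehat v_j\ge v_m-v_j-2b\ge\Delta_k(v)-2b>0 .
\]
Every index of $S$ is therefore strictly above every index outside $S$ under $\widehat v$. Hence the $k$ largest coordinates of $\widehat v$ are exactly $S$, and they are uniquely determined, so no tie-breaking ambiguity arises. The strict inequality $\Delta_k(v)>2b$ is what rules out ties.

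For the combined statement, I would set $v=s(x_*)$ and $\widehat v=\widehat m(z_*)$, and take $b$ to be the right-hand side of Equation~\ref{eq:truth-gap}. Theorem~\ref{thm:truth-gap} then provides the $\ell_\infty$ hypothesis with probability at least $1-\delta$, and the margin condition becomes $b<\Delta_k(s(x_*))/2$. On that event, $S_k(\widehat m(z_*))$ equals the query-gradient top-$k$ set. By Proposition~\ref{prop:block-ntk}, this set is the first-order susceptibility optimum for equal-mass blocks.

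The only delicate point is the tie-breaking convention in defining $S_k$ when $\Delta_k=0$. For the regret bound, any consistent convention works, because the pairing argument only uses membership in $\widehat S$. Beyond that, the argument is elementary.
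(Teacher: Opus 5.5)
Your proof is correct. The exact-recovery step ($\widehat v_m-\widehat v_j\geq\Delta_k(v)-2b>0$ for $m\in S$, $j\notin S$) and the combination with Theorem~\ref{thm:truth-gap} match the paper; the two arguments differ only in the regret bound.

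The paper argues in aggregate. Optimality of $\widehat S$ for $\widehat v$ gives $\sum_{m\in S}\widehat v_m-\sum_{m\in\widehat S}\widehat v_m\leq0$. Adding and subtracting $\widehat v$ over both full sets and applying the triangle inequality then bounds the regret by $\sum_{m\in S}|v_m-\widehat v_m|+\sum_{m\in\widehat S}|v_m-\widehat v_m|\leq2kb$.

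You instead cancel the common indices and pair $S\setminus\widehat S$ with $\widehat S\setminus S$ by a bijection, using the pointwise threshold property $\widehat v_{m'}\geq\widehat v_m$. Your route makes the sharper bound $2b\,|S\setminus\widehat S|$ explicit, so the regret scales with the number of swapped blocks. The paper's argument gives the same constant if the $S\cap\widehat S$ terms are cancelled before the triangle inequality is applied.

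The paper's route has a different advantage: it needs neither a bijection nor the threshold structure of top-$k$. It only uses that $\widehat S$ maximizes a linear $\widehat v$-score over a fixed feasible family containing $S$. It therefore carries over to selection families where the swapped pieces can have unequal sizes, giving $(|S|+|\widehat S|)b$. Your pairing argument is tied to fixed-cardinality top-$k$ selection.
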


By Proposition~\ref{prop:block-ntk}, Equation~\ref{eq:selection-regret}
controls lost infinitesimal perturbation susceptibility.  The finite-step
central-difference assay in Section~\ref{sec:application2} additionally
contains nonlinear effects; Appendix~\ref{app:theory} gives a smooth local
bridge rather than identifying the two quantities.

\paragraph{Complexity.}
Atlas construction requires $N$ source backward passes.  Exact retrieval
stores the $N\times M$ profile table and costs $O(NM)$ per query.  With $D$
Nystr\"om landmarks, construction of the compressed coefficients costs
$O(NMD)$ arithmetic and $O(MD)$ storage; a query forms $D$ landmark responses,
combines them with the stored coefficients in $O(MD)$ time, and performs
partial or full ranking, with no query backward pass.  Held-out profile
retrieval, $n_{\mathrm{eff}}$, and matched-shuffle controls diagnose whether
the chosen query space supplies stable local neighborhoods.
\section{Input-Conditioned Parameter-Influence Circuits}
\label{sec:application2}

\subsection{Setup}

We test a frozen AugReg ViT-B/16 classifier
\cite{dosovitskiy2021image,steiner2022train} on ImageNet
\cite{deng2009imagenet}, using DINOv2-S \cite{oquab2023dinov2} for its learned
query space; Chronos-Bolt-small \cite{ansari2024chronos} on the ETTh1 and
Weather series \cite{zhou2021informer,wu2021autoformer}; and ESM2-35M
\cite{lin2023evolutionary} on reviewed Swiss-Prot sequences
\cite{uniprot2025}. Appendix~\ref{app:protocol} gives the source/query splits,
functionals, group decompositions, kernels, interventions, and uncertainty
estimates.

We first evaluate whether one atlas can identify sparse parameter-group sets
whose exact perturbation effects remain input dependent. The same budgets,
method family, causal assay, and uncertainty estimator are used across the
three tested modalities in Figure~\ref{fig:application2-primary}.

\begin{figure}[t]
  \centering
  \includegraphics[width=.95\linewidth,height=.65\textheight,keepaspectratio]{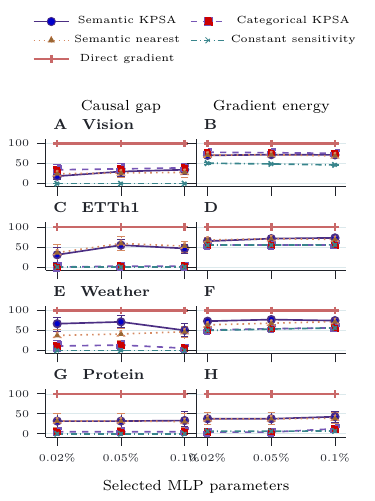}
  \caption{\textbf{Input-conditioned parameter influence transfers across vision, forecasting, and protein models.} Columns report the constant-to-direct-gradient causal-susceptibility gap recovered and direct-gradient energy captured; rows are ImageNet-1k/ViT-B/16, ETTh1 and Weather/Chronos-Bolt, and masked-residue prediction/ESM2. All panels compare semantic and categorical KPSA, semantic nearest-example retrieval, constant sensitivity, and the direct per-query gradient at 0.02\%, 0.05\%, and 0.1\% budgets. Interventions are exact antithetic parameter perturbations; points and bars are query means and paired 95\% bootstrap intervals.}
  \label{fig:application2-primary}
\end{figure}

At the $0.1\%$ budget, semantic KPSA recovers $34.7\%$, $46.5\%$,
$50.2\%$, and $33.2\%$ of the constant-to-direct causal gap on vision,
ETTh1, Weather, and protein modeling, while its selected sets contain
$71.8\%$, $72.9\%$, $74.6\%$, and $42.8\%$ of the same-budget direct
gradient energy (Table~\ref{tab:application2-operating-point}).  Thus the
method estimates a first-order target while evaluation remains an exact
forward perturbation assay; Proposition~\ref{prop:local-perturbation} gives
the smooth local bridge between them and makes clear that finite perturbations
can retain nonlinear error.  The
categorical query space is competitive in vision but contributes little on
the two forecasting transfers and protein task.  Nearest retrieval is
slightly stronger on ETTh1 and slightly weaker on Weather and protein,
showing that local averaging is not uniformly preferable; its advantage
depends on whether the query space supplies multiple relevant neighbors.

Figure~\ref{fig:application2-circuits} tests the stronger circuit claim through
depth, representation-conditioned overlap, and exact causal controls matched
for both budget and layer allocation.

\begin{figure}[t]
  \centering
  \includegraphics[width=.95\linewidth,height=.65\textheight,keepaspectratio]{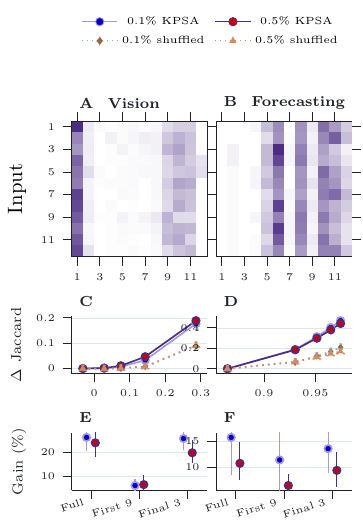}
  \caption{\textbf{Semantic KPSA identifies input-conditioned, causally useful parameter-influence circuits across vision and forecasting.} Columns use the same protocol on ImageNet-1k/ViT-B/16 and ETTh1/Chronos-Bolt. Panels A--B show circuit mass across transformer depth for 12 held-out inputs. C--D relate representation similarity to circuit overlap against shuffled source--profile pairing. E--F report exact perturbation susceptibility above a constant selection with matched per-layer counts, normalized by direct-gradient susceptibility. Full-stack, first-nine-block, and final-quarter selections use 0.1\% and 0.5\% budgets. \emph{Circuit} here denotes an input-conditioned set of native parameter groups with exact local causal utility, without asserting signed directed edges.}
  \label{fig:application2-circuits}
\end{figure}

At $0.1\%$, KPSA selections occupy an average of 6.0 ViT blocks and 5.8
Chronos blocks.  From the lowest-similarity bin to the top $1\%$, their
Jaccard coefficient increases by 0.177 in vision and 0.471 in forecasting,
compared with 0.089 and 0.210 after shuffling the representation--profile
pairing.  Absolute top-bin overlap is 0.389 and 0.614.  The exact
susceptibility gain over a layer-count-matched constant selection is
$26.3\%$ of the direct-gradient scale in vision and $15.8\%$ in forecasting.
Restricting selection to the final quarter retains most of this gain, whereas
the first nine blocks retain substantially less, especially in vision.  These
tests support input-conditioned sparse sets but not a signed-edge circuit
interpretation.

\FloatBarrier
\section{Amortized Neuron Localization}
\label{sec:application1}

\subsection{Setup}

We evaluate the same frozen AugReg ViT-B/16 and DINOv2-S representation
\cite{dosovitskiy2021image,steiner2022train,oquab2023dinov2} on a balanced
ImageNet-200 subset \cite{deng2009imagenet}, and ESM2-35M
\cite{lin2023evolutionary} on reviewed Swiss-Prot sequences
\cite{uniprot2025}. The action is exact zero-deactivation of selected MLP
features. Appendix~\ref{app:protocol} specifies the atlases, feature groups,
functionals, action-matched references, and uncertainty estimates.

The second application uses the atlas to localize MLP features for exact
zero-feature interventions without a backward pass for every query.

\begin{figure}[t]
  \centering
  \includegraphics[width=.95\linewidth,height=.60\textheight,keepaspectratio]{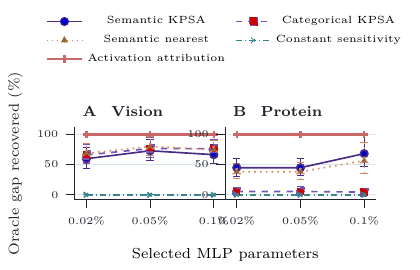}
  \caption{\textbf{KPSA amortizes input-conditioned neuron localization in vision and protein models.} Exact zero-feature deactivation is evaluated against the action-matched gradient-times-activation reference. The y axis reports the fraction of the constant-to-activation-attribution causal gap recovered. Panels show ImageNet with ViT-B/16 and masked-residue prediction with ESM2 under the same 0.02\%, 0.05\%, and 0.1\% parameter budgets and method order. ESM2's coarse grouping maps 0.02\% and 0.05\% to the same one-group action. Points are query means and bars are paired 95\% bootstrap intervals.}
  \label{fig:application1-primary}
\end{figure}

With the 800-source vision atlas, semantic KPSA recovers $66.5\%$ of the
causal gap at the $0.1\%$ budget; categorical KPSA and nearest retrieval reach
$76.5\%$ and $74.8\%$, respectively.  On protein modeling, semantic KPSA is
the strongest approximate method at $68.3\%$, compared with $56.6\%$ for
nearest retrieval and $4.4\%$ for the categorical kernel.  The contrast
reinforces the role of the query representation rather than supporting a
modality-independent best kernel.

\subsection{Accuracy--cost tradeoff}

\begin{figure}[t]
  \centering
  \includegraphics[width=.95\linewidth,height=.65\textheight,keepaspectratio]{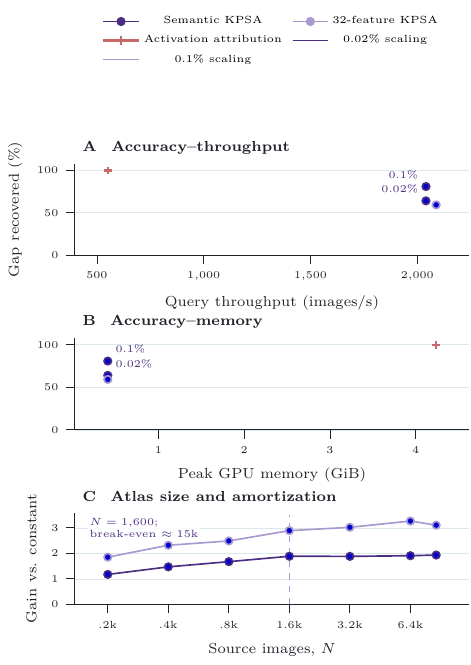}
  \caption{\textbf{Amortized neuron localization occupies a distinct accuracy--cost region.} Panels A and B place semantic KPSA, a compact 32-feature-group atlas, and per-query activation attribution on measured causal fidelity versus batched throughput and peak GPU-memory axes on one RTX 5090. Labels identify KPSA's parameter budget; the feature-atlas accuracy points use the 6,400-source confirmation. Panel C shows causal gain over constant sensitivity as the number of source images increases; the vertical line marks the cost-balanced 1,600-source atlas, whose measured profile-only construction cost breaks even after approximately 15,000 batched queries.}
  \label{fig:application1-efficiency}
\end{figure}

At the 6,400-source confirmation operating point, feature-level KPSA recovers
$80.9\%$ of the activation-attribution gap while processing $2{,}041$
images/s, $3.7\times$ the reference throughput, and using $0.41$ rather than
$4.24$~GiB of peak allocated memory.  Its persistent atlas is 112.5~MiB.  A
32-feature partition reduces this to 3.5~MiB and retains $59.3\%$ gap
recovery.  Figure~\ref{fig:application1-efficiency}C shows the expected
finite-source saturation rather than a monotone guarantee for every source
set; the measured profile-only construction cost crosses the batched query
savings near 15,000 queries and excludes representation encoding and
prototype fitting.

\FloatBarrier
\section{Conclusion}
\label{sec:conclusion}

KPSA treats repeated parameter attribution as estimation of a conditional,
simplex-valued sensitivity profile.  The resulting atlas has a precise
population target, separates sampling, smoothing, feature-approximation, and
representation errors, and recovers the oracle top-$k$ set when its uniform
error is smaller than the oracle margin.  Across the frozen experiments, this
statistical summary retains useful parameter-influence and neuron-localization
signals while moving query cost from backpropagation to kernel retrieval over
a compact atlas.

The guarantees also delimit the claim.  KPSA estimates sensitivity for a
fixed model, functional, partition, representation, and source distribution.
It cannot recover variation discarded by the representation; fixed bandwidth
retains smoothing bias; normalized unsigned energy does not encode sign or
directed computation; top-$k$ recovery needs a margin; and the perturbation
bridge is local in parameter space.  Distribution shift, adaptive bandwidths,
and signed or interaction-aware atlases remain open statistical problems.

\bibliographystyle{plain}

\clearpage
\appendix
\onecolumn
\setcounter{figure}{0}
\renewcommand{\thefigure}{A\arabic{figure}}
\setcounter{table}{0}
\renewcommand{\thetable}{A\arabic{table}}
\section{Experimental Protocol}
\label{app:protocol}

All experiments use frozen pretrained models in evaluation mode.  Source
examples are used to build an atlas; evaluation queries are disjoint from
those sources.  KPSA does not use a query backward pass.  We nevertheless
compute query gradients during evaluation to define the direct references and
to measure how much first-order sensitivity the retrieved groups contain.

\subsection{Atlas construction and query rules}

Let $F_\theta(x)$ be the scalar functional used in an experiment and let
$\mathcal P$ be the full parameter-group partition used to form a profile.
The groups eligible for a downstream action form a subset
$\mathcal G\subseteq\mathcal P$. For each source input $x_i$, we compute the
squared gradient energy of group $g$ and normalize it over $\mathcal P$:
\begin{equation}
  e_{ig}=\lVert \nabla_{\theta_g}F_\theta(x_i)\rVert_2^2,
  \qquad
  r_{ig}=\frac{e_{ig}}{\sum_{h\in\mathcal P} e_{ih}}.
  \label{eq:protocol-profile}
\end{equation}
Thus $\mathbf r_i$ records where the profiled sensitivity lies for source
$i$; it does not retain the total gradient norm. The ViT profile covers the
complete model partition, whereas the Chronos and ESM2 profiles cover all
feed-forward groups of the chosen component. We retain the coordinates in
$\mathcal G$ for ranking and intervention. All paper results use this
normalized profile.

For a query representation $z_*$, the exact kernel score for group $g$ is
\begin{equation}
  \widehat r_g(z_*)=\frac{1}{N}\sum_{i=1}^{N}
  k(z_i,z_*)r_{ig}.
  \label{eq:protocol-query}
\end{equation}
When $k(z,z')=\langle\phi(z),\phi(z')\rangle$, we store
$\widehat{\mathbf m}_g=N^{-1}\sum_i r_{ig}\phi(z_i)$ and evaluate
$\langle\widehat{\mathbf m}_g,\phi(z_*)\rangle$.  The stored atlas therefore
has one $D$-dimensional row for every parameter group, not one row in total.
Normalizing Equation~\ref{eq:protocol-query} by $\sum_i k(z_i,z_*)$ would
multiply every group score for one query by the same positive number and
would not change the selected groups.

\paragraph{Query spaces.}
Learned-representation KPSA uses an $\ell_2$-normalized hidden
representation and an RBF kernel.  The ImageNet experiments approximate this
kernel with fixed prototype-response features; the forecasting and protein
experiments evaluate it against all source representations.  Categorical
KPSA uses a delta kernel on class identity in vision, a joint
hour/day-of-week/month coordinate in forecasting, and masked-residue identity
in protein modeling.  Categorical KPSA is a second instance of the method,
not an external baseline.

\paragraph{Comparison methods.}
Semantic nearest returns the profile of the source with the largest
representation cosine similarity.  Constant sensitivity returns the mean
source profile for every query.  A matched-shuffle control preserves the
source representations and profiles but permutes their pairing; it is used
during method selection and in the circuit-overlap analysis.  The direct
parameter reference computes Equation~\ref{eq:protocol-profile} on each
query.  The action-matched reference for neuron localization instead ranks
features by their first-order gradient-times-activation score for the exact
zero-feature intervention.

Groups are ranked by their nonnegative score.  We add groups in descending
order until reaching the requested fraction of the eligible parameter mass.
The main figures use 0.02\%, 0.05\%, and 0.1\%.  These correspond to 8, 19,
and 37 ViT MLP features; 1, 2, and 3 features in the selected Chronos block;
and 1, 1, and 2 features in the selected ESM2 block.  The repeated ESM2 count
at the first two budgets follows from its coarser groups.

\subsection{Models, data, and frozen settings}

Table~\ref{tab:protocol-settings} lists the instances used in the main
figures.  $N$ is the number of atlas sources and $n$ is the number of
evaluation queries entering the reported mean.  For classification and masked
prediction we retain only queries the intact model answers correctly.  No
such filter is used for forecasting.

\begin{table}[H]
  \centering
  \footnotesize
  \setlength{\tabcolsep}{4pt}
  \renewcommand{\arraystretch}{1.05}
  \begin{tabularx}{\linewidth}{@{}>{\raggedright\arraybackslash}p{.07\linewidth}>{\raggedright\arraybackslash}p{.12\linewidth}>{\raggedright\arraybackslash}p{.135\linewidth}>{\raggedright\arraybackslash}p{.19\linewidth}>{\raggedright\arraybackslash}Xrr@{}}
    \toprule
    Use & Setting & Model & Functional $F$ & Eligible groups and query kernel & $N$ & $n$ \\
    \midrule
    Influence & ImageNet-1k & ViT-B/16 (86.6M) & True-class logit minus strongest alternative & 36,864 coupled MLP features; DINOv2-S CLS, 1,000-prototype RBF map, scale .10 & 8,000 & 169 \\
    Influence & ETTh1 & Chronos-Bolt-small (47.7M) & Mean 24-step median forecast minus last observation & 2,048 final-decoder FF features; final encoder state, exact RBF, scale .25 & 1,600 & 128 \\
    Influence & Weather & Chronos-Bolt-small (47.7M) & Same forecasting functional & Same frozen ETTh1 setting & 1,600 & 128 \\
    Influence & Swiss-Prot & ESM2-35M (33.5M) & True masked-residue logit minus strongest alternative & 1,920 layer-3 outgoing FF features; sequence mean, exact RBF, scale .10 & 1,600 & 48 \\
    \addlinespace[2pt]
    Circuit & ImageNet-1k & ViT-B/16 & ImageNet logit margin & Full 12-block MLP stack; DINOv2-S CLS, 1,000-prototype RBF map, scale .10 & 8,000 & 110 \\
    Circuit & ETTh1 & Chronos-Bolt-small & Forecasting functional above & Full 12-block FF stack; final encoder state, exact RBF, scale .25 & 1,600 & 128 \\
    \addlinespace[2pt]
    Neuron loc. & ImageNet-200 & ViT-B/16 & ImageNet logit margin & 36,864 coupled MLP features; DINOv2-S CLS, 800-prototype RBF map, scale .025 & 800 & 76 \\
    Neuron loc. & Swiss-Prot & ESM2-35M & Masked-residue margin & Same ESM2 atlas and scope as above & 1,600 & 128 \\
    \bottomrule
  \end{tabularx}
  \caption{Frozen experimental instances used in the primary causal and circuit figures. ImageNet-200 is a balanced 200-class subset of ImageNet validation. The circuit-structure rows use a wider parameter scope than the corresponding primary causal rows; the separate efficiency variants are specified in Section~\ref{app:protocol-efficiency}.}
  \label{tab:protocol-settings}
\end{table}

\paragraph{Vision.}
The ImageNet-1k parameter-influence atlas uses eight source images per class
from ImageNet validation~\cite{deng2009imagenet}.
Separate images at within-class offset 43 construct the prototype map, and
offset 45 supplies the confirmation pool.  We sample 200 evenly spaced
classes from that pool and report the 169 correctly classified queries. The
circuit-overlap analysis uses offsets 46--48. Its exact causal subset uses
offset 46 and reports 110 correctly classified queries out of 128 sampled.
The neuron-localization study uses a frozen balanced 200-class subset, 800
source images, 800 disjoint prototype-construction images, and a fresh target
shift.  Of 100 sampled queries, 76 are classified correctly.
DINOv2-S~\cite{oquab2023dinov2} is used only to define the query space;
ViT-B/16~\cite{dosovitskiy2021image} supplies the functional and the parameter
gradients.

Each ViT feature group couples one row of the first MLP projection, its bias,
and the corresponding column of the second projection.  This gives 3,072
groups in each of 12 blocks.  Classifier, attention, normalization, and patch
embedding parameters are outside the eligible scope.

\paragraph{Forecasting.}
We use univariate ETTh1 oil temperature~\cite{zhou2021informer} and Weather
air temperature~\cite{wu2021autoformer}. Each query contains 512 observations
and Chronos-Bolt-small~\cite{ansari2024chronos} predicts 24 steps. Values
are standardized through index 8,400 for ETTh1 and 24,000 for Weather. Source
window endpoints lie in $[600,8400]$ for ETTh1 and $[600,24000]$ for Weather;
query endpoints lie in $[12300,16800]$ and $[33000,50000]$, respectively.
The 1,600 source endpoints and 128 query endpoints are evenly spaced in their
ranges.  A feature group couples an incoming FF row, its bias, and the
corresponding outgoing column.  The final decoder block, representation, and
RBF scale were selected on ETTh1 development data and then frozen for the
later ETTh1 confirmation and the Weather transfer.

\paragraph{Protein modeling.}
The protein corpus contains 3,000 reviewed Swiss-Prot
sequences~\cite{uniprot2025} of length 80--400. One canonical amino acid per
sequence is masked at a position fixed
by a hash of its accession.  Source and query accessions do not overlap, and
queries are retained only when ESM2 predicts the masked residue correctly.
The query representation is the normalized mean final hidden state over
non-special sequence tokens. We use the 12-layer, 35M-parameter ESM2
checkpoint~\cite{lin2023evolutionary}. A group is one outgoing column of the layer-3
feed-forward output projection.  The representation, outgoing component,
layer, and RBF scale were chosen in a bounded development screen and then
held fixed.  The same 1,600-source atlas is used for both applications.

\subsection{Application 2: exact parameter-perturbation assay}

For a selected set $A$ and Rademacher vector $\xi_d$ supported on its
parameters, we measure
\begin{equation}
  C_A(x)=\frac{1}{D}\sum_{d=1}^{D}
  \left(
    \frac{F_{\theta+\sigma\xi_d}(x)-F_{\theta-\sigma\xi_d}(x)}{2\sigma}
  \right)^2.
  \label{eq:protocol-susceptibility}
\end{equation}
This is an exact-forward central-difference estimate of local squared
susceptibility along the sampled parameter directions. We set $\sigma$ to
3\% of the scoped parameter root-mean-square value in vision and protein and
to 3\% of the all-FF root-mean-square value in Chronos. The ImageNet
primary experiment uses 32 antithetic directions per query, method, and
budget; the circuit, forecasting, and protein experiments use 16.  Directions
are averaged within a query and are never treated as independent samples.
The same seeded group-coordinate signs are used across methods where their
selected groups overlap.

The direct-gradient reference ranks groups by their query gradient energy.
It is the exact first-order score for the stated functional and group
decomposition; the finite perturbation assay remains necessary because
Equation~\ref{eq:protocol-susceptibility} also contains local nonlinear
effects. For each method we first sum the direct query-gradient energy in its
selected set. The plotted gradient-energy value divides this sum by the
energy in the same-budget direct-gradient selection. It therefore reports
recovery of the best first-order group selection at that budget, rather than
a percentage of the model's total gradient energy. Causal gap recovery is
\begin{equation}
  \frac{\overline C_{\mathrm{method}}-
        \overline C_{\mathrm{constant}}}
       {\overline C_{\mathrm{direct}}-
        \overline C_{\mathrm{constant}}}.
  \label{eq:protocol-causal-gap}
\end{equation}

For Figure~\ref{fig:application2-circuits}, we additionally select from the
full 12-block stack at 0.1\% and 0.5\%.  The matched constant control receives
the same number of groups in every layer as the KPSA selection.  The
``first-nine'' and ``final-quarter'' controls restrict both methods to blocks
1--9 or 10--12 before matching layer counts.  Circuit overlap is the Jaccard
coefficient between two selected group sets.  The shuffled-pairing control
uses the same query representations and source profiles as KPSA, but breaks
their correspondence.

\subsection{Application 1: exact feature-deactivation assay}

For neuron localization, the selected features are set to zero at the input
to the feed-forward output projection, at every token or patch position.  In
vision the causal effect is the signed decrease in the correct-class logit
margin.  In protein modeling it is the absolute change in the masked-residue
margin.  The latter matches the absolute gradient-times-activation ranking
used for that task.  The vision reference uses the positive first-order
degradation $\nabla_a F\cdot a$; the protein reference sums
$|\nabla_a F\cdot a|$ over tokens.  The reported attribution-gap recovery
replaces $C$ in Equation~\ref{eq:protocol-causal-gap} with the corresponding
exact zero-deactivation effect and replaces the direct parameter reference
with this action-matched activation-attribution reference.

The vision prototype count and bandwidth were chosen on a development split
by causal gain over constant sensitivity, with a requirement that the method
also beat its matched-shuffle control at the tested budgets.  The reported
confirmation uses a fresh target shift.  The protein setting was frozen after
the bounded representation, component, layer, and bandwidth screen described
above.

\subsection{Uncertainty and efficiency measurement}
\label{app:protocol-efficiency}

Primary-figure intervals use 5,000 paired nonparametric bootstrap resamples
of queries.  For each resample we recompute the ratio of query means in
Equations~\ref{eq:protocol-causal-gap}; we do not average per-query ratios or
clip intervals to $[0,100]$.  Circuit causal comparisons first average
perturbation directions within each query and then use paired 95\% $t$
intervals across queries.  Circuit-overlap intervals use the input pairs in
each similarity bin.  We report estimation intervals for the fixed models,
datasets, and seeds; they do not measure variation from model training.

The vision efficiency benchmark runs on one NVIDIA RTX 5090 with PyTorch
2.13.0 and CUDA 13.0.  Batched results use batch size 32, five warm-up passes,
and ten timed batches.  A KPSA query includes CPU image preprocessing, the
DINOv2-S forward pass, 800 prototype responses, the stored-atlas matrix
product, and top-$k$ selection.  The activation-attribution implementation
requests gradients only for the intermediate activations needed by its score.
Peak allocated GPU memory and persistent atlas storage are reported
separately.  Panels A--B of Figure~\ref{fig:application1-efficiency} use the
$N=6{,}400$ confirmation accuracy record; Panel C shows the independent
source-count sweep and its $N=1{,}600$ cost-balanced point.  Profile-only
break-even estimates exclude source representation encoding and prototype
fitting, and are therefore lower bounds on full construction amortization.

\subsection{Assets, licenses, and research ethics}
\label{app:assets}

All pretrained models and datasets are attributed to their creators in
Table~\ref{tab:protocol-settings} and the surrounding protocol.  The AugReg
ViT checkpoint is accessed through \texttt{timm}
\cite{wightman2019timm}; its code is Apache-2.0 and its ImageNet-trained
weights retain the upstream ImageNet-use caveat.  DINOv2 code and weights are
Apache-2.0, Chronos code and weights are Apache-2.0, and the ESM release is
MIT-licensed.  ImageNet is used under its non-commercial research terms, and
UniProt's copyrightable database content is CC BY 4.0.  ETT and Weather are
used from the cited public research releases under their upstream terms.

The supplement does not redistribute raw datasets or pretrained weights.  It
contains derived aggregate tables, figure-generation scripts, rendered paper
figures, and illustrative overview thumbnails.  No new dataset or model
checkpoint is released.  The experiments involve no crowdsourcing or human
subjects, and no identity, demographic, or sensitive-attribute inference is
performed.  ImageNet can contain people incidentally; the study uses only
model inputs and class labels for the stated non-commercial research purpose.
\section{Theoretical Assumptions and Proofs}
\label{app:theory}

\subsection{Formal setting}
\label{app:theory-setting}

Let $(X_i)_{i=1}^N$ be independent draws from a distribution $P_X$.  The
frozen functional $F_\theta$, parameter partition $\mathcal P$, query map
$\psi$, and retrieval kernel $\kappa_h$ are fixed independently of these
draws.  Write $Z_i=\psi(X_i)$ and $S_i=s(X_i)$, where $s$ is the normalized
block-NTK profile in Equation~\ref{eq:kpsa-profile}.  Positivity of the
profiled NTK diagonal gives
\begin{equation}
 S_i\in\Delta_M,
 \qquad \|S_i\|_1=1,
 \qquad \|S_i\|_2\leq1.
 \label{eq:app-simplex}
\end{equation}
For a stored subcollection of blocks, the same arguments hold with
$\|S_i\|_1\leq1$.  The conditional target
$m_\psi(z)=\mathbb E[S\mid Z=z]$ is defined $P_Z$-almost everywhere and lies
in the same simplex or sub-simplex.

\subsection{Proof of the block-NTK derivation}

\begin{proof}[Proof of Proposition~\ref{prop:block-ntk}]
Because the parameter blocks are disjoint, the full profiled gradient is the
concatenation of $(g_m)_{m=1}^M$.  Its inner product therefore decomposes as
\begin{equation*}
 \langle\nabla_{\theta_{\mathcal P}}F_\theta(x),
 \nabla_{\theta_{\mathcal P}}F_\theta(x')\rangle
 =\sum_{m=1}^M\langle g_m(x),g_m(x')\rangle,
\end{equation*}
which proves Equation~\ref{eq:block-ntk}.  On the diagonal this becomes a sum
of nonnegative squared norms, so Equation~\ref{eq:kpsa-profile} lies in the
simplex.

For a Rademacher direction supported on $G_m$,
\begin{equation*}
 \left.\frac{d}{dt}F_{\theta+t\xi_m}(x)\right|_{t=0}
 =g_m(x)^\top\xi_m.
\end{equation*}
Independence, zero mean, and unit variance of the coordinates give
$\mathbb E[(g_m(x)^\top\xi_m)^2]=\|g_m(x)\|_2^2=K_m(x,x)$.
For equal-mass blocks, maximizing the sum of these nonnegative quantities over
$k$ blocks selects their $k$ largest values, equivalently the $k$ largest
coordinates of $s(x)$.
\end{proof}

\subsection{Proof of query-space sensitivity sufficiency}

\begin{proof}[Proof of Proposition~\ref{prop:query-sufficiency}]
Write $m(Z)=m_\psi(Z)$ and expand
\begin{align*}
 \|S-f(Z)\|_2^2
 &=\|S-m(Z)\|_2^2+\|m(Z)-f(Z)\|_2^2\\
 &\quad+2\langle S-m(Z),m(Z)-f(Z)\rangle.
\end{align*}
The conditional expectation of the last term given $Z$ is zero because
$\mathbb E[S-m(Z)\mid Z]=0$.  Taking expectations proves
Equation~\ref{eq:representation-decomposition}.  The lower bound follows
because the second term is nonnegative.  Finally,
$\mathcal I_\psi=0$ if and only if $S=m_\psi(Z)$ almost surely, which is
equivalent to the sensitivity profile being determined by the query
representation up to null sets.
\end{proof}

\subsection{Proof of the finite-source sensitivity gap}

\begin{proof}[Proof of Theorem~\ref{thm:truth-gap}]
Fix the source representations and abbreviate
$\lambda_i=\lambda_i(z_*)$ and $m_i=m_\psi(Z_i)$.  Decompose
\begin{equation}
 \widehat m(z_*)-m_\psi(z_*)
 =\underbrace{\sum_i\lambda_i(S_i-m_i)}_{V}
 +\underbrace{\sum_i\lambda_i(m_i-m_\psi(z_*))}_{B}.
 \label{eq:app-bias-variance}
\end{equation}
Conditional independence and
$\mathbb E[S_i-m_i\mid Z_i]=0$ imply
\begin{align}
 \mathbb E[\|V\|_2^2\mid Z_{1:N}]
 &=\sum_i\lambda_i^2
   \mathbb E[\|S_i-m_i\|_2^2\mid Z_i]\nonumber\\
 &\leq\sum_i\lambda_i^2.
 \label{eq:app-vector-variance}
\end{align}
Hence $\mathbb E[\|V\|_2\mid Z_{1:N}]
\leq n_{\mathrm{eff}}^{-1/2}$.  Replacing one profile by another simplex
point changes $\|V\|_2$ by at most $\sqrt2\lambda_i$.  Conditional
McDiarmid concentration therefore gives, with probability at least
$1-\delta$,
\begin{equation*}
 \|V\|_2\leq
 \frac{1+\sqrt{\log(1/\delta)}}{\sqrt{n_{\mathrm{eff}}}}.
\end{equation*}
Target stability gives
\begin{equation*}
 \|B\|_2
 \leq\sum_i\lambda_i\|m_\psi(Z_i)-m_\psi(z_*)\|_2
 \leq L\rho_\alpha(z_*).
\end{equation*}
Finally, $\|u\|_\infty\leq\|u\|_2$ and
$\|m_\psi(z_*)-s(x_*)\|_\infty=\varepsilon_\psi(x_*)$ give
Equation~\ref{eq:truth-gap}.
\end{proof}

\paragraph{Categorical kernels.}
For a delta kernel, every nonzero-weight source shares the query category.
The stability term is zero, $n_{\mathrm{eff}}$ is the category source count
under uniform weights, and the representation residual records
within-category sensitivity heterogeneity.

\subsection{Proof of the local rate}

\begin{proof}[Proof of Corollary~\ref{cor:local-rate}]
Let $D=d(Z,z_*)$ and suppose that, for all sufficiently small $r$,
\begin{equation}
 c_-r^d\leq\Pr(D\leq r)\leq c_+r^d
 \label{eq:app-small-ball}
\end{equation}
for constants $c_-,c_+>0$.  Take
$\kappa_h(Z,z_*)=\exp[-D^2/(2h^2)]$ and let
$B_h=\sum_i\mathbf 1\{d(Z_i,z_*)\leq h\}$.  A binomial Chernoff bound and
the lower small-ball condition give, with probability tending to one whenever
$Nh^d\to\infty$,
\begin{equation}
 \sum_i\kappa_h(Z_i,z_*)\geq e^{-1/2}B_h\geq cNh^d
 \label{eq:app-denominator}
\end{equation}
for a constant $c>0$.  Since $0\leq\kappa_h\leq1$,
\begin{equation}
 n_{\mathrm{eff}}
 =\frac{(\sum_i\kappa_i)^2}{\sum_i\kappa_i^2}
 \geq\sum_i\kappa_i
 =\Omega_p(Nh^d).
 \label{eq:app-neff-rate}
\end{equation}

Splitting the query space into shells of width $h$ and applying the upper
small-ball condition gives
\begin{equation*}
 \mathbb E[\kappa_h(Z,z_*)D^\alpha]=O(h^{d+\alpha}).
\end{equation*}
Consequently,
$\sum_i\kappa_iD_i^\alpha=O_p(Nh^{d+\alpha})$ and
$\rho_\alpha=O_p(h^\alpha)$.  Combining these bounds with the proof of
Theorem~\ref{thm:truth-gap}, now targeting $m_\psi(z_*)$ rather than a
realized profile, yields
\begin{equation*}
 \|\widehat m(z_*)-m_\psi(z_*)\|_\infty
 =O_p\!\left(h^\alpha+(Nh^d)^{-1/2}\right).
\end{equation*}
Balancing the terms gives $h\asymp N^{-1/(2\alpha+d)}$ and the stated rate.
\end{proof}

The exponent contains the local small-ball dimension, not necessarily the
ambient representation dimension.  If a learned representation has
vanishing local mass, $n_{\mathrm{eff}}$ exposes the failure: profile
normalization cannot manufacture relevant neighbors.

\subsection{Normalization, kernel approximation, and selection}

\begin{proof}[Proof of Equation~\ref{eq:normalization-stability}]
Let $E=\|e\|_1$, $\Delta=\sum_m\eta_m$, and
$\widetilde E=aE+\Delta$.  Since
$|\Delta|\leq\|\eta\|_1\leq\gamma aE$, we have
$\widetilde E\geq(1-\gamma)aE$.  Adding and subtracting
$ae/\widetilde E$ gives
\begin{align*}
 \left\|\frac{ae+\eta}{\widetilde E}-\frac eE\right\|_1
 &\leq\frac{\|\eta\|_1}{\widetilde E}
 +E\left|\frac a{\widetilde E}-\frac1E\right|\\
 &=\frac{\|\eta\|_1+|\Delta|}{\widetilde E}
 \leq\frac{2\gamma}{1-\gamma}.
\end{align*}
Setting $\eta=0$ proves invariance to every positive common scale.
\end{proof}

\begin{proposition}[Nystr\"om-kernel approximation]
\label{prop:kernel-approximation}
Suppose a realized retrieval kernel $\widetilde\kappa$ satisfies
$\sup_{u,v}|\widetilde\kappa(u,v)-\kappa(u,v)|\leq\epsilon$.  Then every
unnormalized atlas coordinate differs by at most $\epsilon$.  If the empirical
ideal kernel mass is $\widehat q(z)>\epsilon$, the normalized coordinate
differs by at most $2\epsilon/(\widehat q(z)-\epsilon)$.  An unnormalized
top-$k$ set is unchanged when its score margin exceeds $2\epsilon$.
\end{proposition}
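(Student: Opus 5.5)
The plan is to handle the three claims in turn, working throughout with fixed source representations $z_1,\ldots,z_N$, profiles $s_i\in\Delta_M$ (or a sub-simplex), and a query $z$. Write the unnormalized coordinates as $u_g(z)=\sum_i\kappa(z_i,z)s_{ig}$ and $\widetilde u_g(z)=\sum_i\widetilde\kappa(z_i,z)s_{ig}$. If the unnormalized score carries the $1/N$ factor of Equation~\ref{eq:protocol-query}, the ideal kernel mass is read as $\widehat q(z)=N^{-1}\sum_i\kappa(z_i,z)$. That is the convention under which ``at most $\epsilon$'' is scale-free, so I will adopt the $1/N$ normalization for both scores and mass.

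\emph{Unnormalized coordinates.} The difference is
\[
\widetilde u_g-u_g=N^{-1}\sum_i(\widetilde\kappa-\kappa)(z_i,z)\,s_{ig}.
\]
Bound each kernel discrepancy by $\epsilon$ and use $\sum_i s_{ig}\le N$, since $s_{ig}\in[0,1]$. This gives $|\widetilde u_g-u_g|\le\epsilon$ uniformly in $g$. The same argument bounds the masses: $|\widetilde q-\widehat q|\le\epsilon$, where $\widetilde q=N^{-1}\sum_i\widetilde\kappa(z_i,z)$.

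\emph{Normalized coordinates.} Assume $\widehat q>\epsilon$. Then $\widetilde q\ge\widehat q-\epsilon>0$, so $\widetilde u_g/\widetilde q$ is well defined. Add and subtract $u_g/\widetilde q$:
\[
\Bigl|\tfrac{\widetilde u_g}{\widetilde q}-\tfrac{u_g}{\widehat q}\Bigr|
\le\frac{|\widetilde u_g-u_g|}{\widetilde q}+\frac{u_g}{\widehat q}\cdot\frac{|\widehat q-\widetilde q|}{\widetilde q}.
\]
The first term is at most $\epsilon/(\widehat q-\epsilon)$. For the second, note that $u_g\le\widehat q$ because $s_{ig}\le1$, so $u_g/\widehat q\le1$, and the term is also at most $\epsilon/(\widehat q-\epsilon)$. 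Adding the two yields $2\epsilon/(\widehat q-\epsilon)$. This mirrors the add-and-subtract proof of Equation~\ref{eq:normalization-stability}.

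\emph{Top-$k$ invariance.} Apply Proposition~\ref{prop:selection} with $v=u(z)$, $\widehat v=\widetilde u(z)$ and $b=\epsilon$. The margin condition $\Delta_k(u)>2\epsilon$ then gives $S_k(\widetilde u)=S_k(u)$. The underlying reason is direct: every retained coordinate exceeds every excluded one by more than $2\epsilon$, and each coordinate moves by at most $\epsilon$.

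\emph{Main obstacle.} The only delicate point is the bookkeeping of scale. The statement's ``$\epsilon$'' and ``$\widehat q$'' must share the same $1/N$ normalization, and ties must be ruled out. The strict margin handles ties, so I will state the normalization convention explicitly at the start.
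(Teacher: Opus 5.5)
Your proposal is correct and follows essentially the same argument as the paper. The paper also uses the $1/N$-normalized uniform bound from $0\le s_{ig}\le 1$, the add-and-subtract ratio estimate with $u_g\le\widehat q$ and $\widetilde q\ge\widehat q-\epsilon$, and the observation that a margin above $2\epsilon$ survives coordinate moves of at most $\epsilon$; stating the $1/N$ convention up front, and noting that $u_g\le\widehat q$ relies on $\kappa\ge 0$ from A2, makes explicit what the paper leaves implicit.
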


\begin{proof}
Because $0\leq S_{im}\leq1$,
\begin{equation*}
 \left|\frac1N\sum_i
 [\widetilde\kappa(Z_i,z)-\kappa(Z_i,z)]S_{im}\right|
 \leq\epsilon.
\end{equation*}
The two empirical kernel masses also differ by at most $\epsilon$.  Writing
$A,\widetilde A$ for one ideal and approximate unnormalized coordinate and
$q,\widetilde q$ for their masses, $A\leq q$ and
$\widetilde q\geq q-\epsilon$, hence
\begin{equation*}
 \left|\frac{\widetilde A}{\widetilde q}-\frac Aq\right|
 \leq\frac{|\widetilde A-A|}{\widetilde q}
 +\frac{A|\widetilde q-q|}{q\widetilde q}
 \leq\frac{2\epsilon}{q-\epsilon}.
\end{equation*}
The margin statement follows because each compared coordinate moves by at
most $\epsilon$.
\end{proof}

\begin{proof}[Proof of Proposition~\ref{prop:selection}]
Let $S=S_k(v)$ and $\widehat S=S_k(\widehat v)$.  Optimality of
$\widehat S$ for $\widehat v$ gives
$\sum_{m\in S}\widehat v_m-\sum_{m\in\widehat S}\widehat v_m\leq0$.
Therefore
\begin{align*}
 \sum_{m\in S}v_m-\sum_{m\in\widehat S}v_m
 &\leq
 \sum_{m\in S}|v_m-\widehat v_m|
 +\sum_{m\in\widehat S}|v_m-\widehat v_m|
 \leq2kb.
\end{align*}
If $m\in S$ and $j\notin S$, then
$\widehat v_m-\widehat v_j
\geq v_m-v_j-2b\geq\Delta_k(v)-2b>0$, proving exact recovery.
\end{proof}

\subsection{Bridge to the finite perturbation assay}

The selection results concern normalized first-order susceptibility.  The
following proposition quantifies when that target controls the
central-difference assay in Equation~\ref{eq:protocol-susceptibility}.

\begin{proposition}[Local perturbation bridge]
\label{prop:local-perturbation}
Fix $x$ and a selected parameter set $A$.  Suppose $F_\theta(x)$ is three
times differentiable on every segment $\theta+t\xi$, $|t|\leq\sigma$, used by
the assay, with $\|\nabla_\theta F_\theta(x)\|_2\leq B$ and third-derivative
operator norm at most $T$.  Let $\xi$ have independent Rademacher coordinates
on $A$, zero elsewhere, and satisfy $\|\xi\|_2\leq R_A$.  For
\begin{equation*}
 D_\sigma(\xi)
 =\frac{F_{\theta+\sigma\xi}(x)-F_{\theta-\sigma\xi}(x)}{2\sigma},
\end{equation*}
we have
\begin{equation}
 \left|
 \mathbb E_\xi[D_\sigma(\xi)^2]
 -\sum_{j\in A}(\partial_{\theta_j}F_\theta(x))^2
 \right|
 \leq
 \frac{TB\sigma^2R_A^4}{3}
 +\frac{T^2\sigma^4R_A^6}{36}.
 \label{eq:local-perturbation}
\end{equation}
For $D$ independent directions, if
$U_A=BR_A+T\sigma^2R_A^3/6$, then with probability at least $1-\delta$ the
empirical assay mean differs from its expectation by at most
\begin{equation}
 U_A^2\sqrt{\frac{\log(2/\delta)}{2D}}.
 \label{eq:finite-directions}
\end{equation}
\end{proposition}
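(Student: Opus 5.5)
Fix $x$ and a direction $\xi$ from the assay, and let $\phi(t)=F_{\theta+t\xi}(x)$ on $|t|\le\sigma$. The plan is to Taylor-expand $\phi$ about $t=0$ to third order with a remainder. The even-order terms cancel in the central difference, which leaves
\begin{equation*}
 D_\sigma(\xi)=\phi'(0)+r(\xi),\qquad
 \phi'(0)=\nabla_\theta F_\theta(x)^\top\xi=:\ell(\xi),
\end{equation*}
where $r(\xi)=\tfrac{1}{12}\sigma^2[\phi'''(t_+)+\phi'''(-t_-)]$ for some intermediate points, or equivalently an integral-form remainder. Since $\phi'''(t)=\nabla^3F[\xi,\xi,\xi]$, the third-derivative bound gives $|\phi'''|\le T\|\xi\|_2^3\le TR_A^3$. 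Hence $|r(\xi)|\le T\sigma^2R_A^3/6$.

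Next I square the expansion:
\begin{equation*}
 D_\sigma^2=\ell^2+2\ell r+r^2 .
\end{equation*}
Because $\xi$ has independent Rademacher coordinates supported on $A$, the argument of Proposition~\ref{prop:block-ntk} gives $\mathbb E_\xi[\ell(\xi)^2]=\sum_{j\in A}(\partial_{\theta_j}F_\theta(x))^2$. This is exactly the subtracted term in Equation~\ref{eq:local-perturbation}. The deviation is therefore $|\mathbb E[2\ell r+r^2]|$. Using Cauchy--Schwarz, $|\ell|\le\|\nabla F\|_2\|\xi\|_2\le BR_A$, so
\begin{equation*}
 |2\ell r|\le 2BR_A\cdot T\sigma^2R_A^3/6=TB\sigma^2R_A^4/3,
 \qquad
 r^2\le T^2\sigma^4R_A^6/36 .
\end{equation*}
These reproduce the two terms of Equation~\ref{eq:local-perturbation}. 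The first-order term needs no separate control, since it enters exactly through the Rademacher identity.

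For the finite-direction statement, the same bounds give $|D_\sigma(\xi)|\le|\ell|+|r|\le BR_A+T\sigma^2R_A^3/6=U_A$ almost surely. So each $D_\sigma(\xi_d)^2$ lies in $[0,U_A^2]$. The $D$ directions are independent, so Hoeffding's inequality applied to their average $C_A(x)$ gives the deviation $U_A^2\sqrt{\log(2/\delta)/(2D)}$ with probability at least $1-\delta$, as in Equation~\ref{eq:finite-directions}. The antithetic pairing within a single direction needs no extra treatment, because each summand already is the antithetic difference for one $\xi_d$.

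The main obstacle is getting the remainder constant exactly $1/6$ rather than $1/12$ or $1/3$. The symmetric Lagrange form gives $\sigma^2(\phi'''(t_1)+\phi'''(t_2))/12$, whose absolute value is at most $\sigma^2 TR_A^3/6$, which matches the stated constants. I would write it instead via the integral remainder $\frac{1}{2\sigma}\int_0^\sigma\frac{(\sigma-t)^2}{2}[\phi'''(t)+\phi'''(-t)]\,dt$. This avoids choosing intermediate points and yields the same bound directly. It also uses differentiability only along the segments, as the hypothesis assumes.
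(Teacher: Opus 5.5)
Your proposal is correct and takes essentially the same route as the paper. Both arguments use a symmetric third-order Taylor expansion giving $D_\sigma(\xi)=\nabla_\theta F_\theta(x)^\top\xi+e(\xi)$ with $|e(\xi)|\le T\sigma^2R_A^3/6$, the bound $|(a+e)^2-a^2|\le 2|a||e|+e^2$ with $|a|\le BR_A$, the Rademacher identity for $\mathbb E_\xi[a^2]$, and Hoeffding on the squared differences bounded by $U_A^2$. The only difference is that your explicit Lagrange or integral-remainder derivation of the $1/6$ constant spells out a step the paper just cites as the symmetric Taylor formula.
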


\begin{proof}
The symmetric third-order Taylor formula gives
$D_\sigma(\xi)=\nabla F_\theta(x)^\top\xi+e(\xi)$ with
$|e(\xi)|\leq T\sigma^2\|\xi\|_2^3/6$.  Put
$a=\nabla F_\theta(x)^\top\xi$.  Since $|a|\leq BR_A$,
\begin{equation*}
 |(a+e)^2-a^2|
 \leq2|a||e|+e^2
 \leq\frac{TB\sigma^2R_A^4}{3}
 +\frac{T^2\sigma^4R_A^6}{36}.
\end{equation*}
Independence and zero mean of the Rademacher coordinates give
$\mathbb E_\xi[a^2]=\sum_{j\in A}(\partial_{\theta_j}F_\theta(x))^2$.
Also $|D_\sigma(\xi)|\leq U_A$, so Hoeffding's inequality for the $D$
bounded squared derivatives proves Equation~\ref{eq:finite-directions}.
\end{proof}

The discrepancy between first-order and exact susceptibility contains a
local nonlinearity term of order $\sigma^2$ and a finite-direction term of
order $D^{-1/2}$.  The result does not assert that their constants are small
for every network or intervention.

\subsection{Scope of the statistical claims}

The finite-source theorem specializes standard kernel-regression reasoning to
the block-NTK sensitivity target.  Its purpose is to identify what controls
atlas fidelity: source coverage, stability of the NTK-derived target in the
chosen query space, and sensitivity variation that the representation does
not retain.  It is not a claim of a new generic kernel-regression rate.

Recovery of a realized query profile requires a small representation residual;
recovery of the conditional profile requires local mass, target stability,
and a suitable bandwidth.  Exact top-$k$ recovery additionally requires a
margin, and finite perturbations require the smooth local bridge above.

The independence assumption is appropriate for randomly sampled source
atlases.  The forecasting sources are deterministic, overlapping windows of
one series; a time-series extension would require mixing or other dependence
assumptions and is not claimed here.  Hyperparameters chosen on separate
development data satisfy the fixed-retrieval-rule setup, whereas unrestricted
selection on evaluation queries would require additional analysis.
\section{Additional Parameter-Influence Circuit Evidence}
\label{app:application2}

\subsection{Numerical results at the largest primary budget}

Table~\ref{tab:application2-operating-point} gives the exact values underlying
the 0.1\% endpoints in Figure~\ref{fig:application2-primary}. Reporting both
columns matters because a method can capture substantial gradient energy yet
fail to turn that energy into a comparable finite-perturbation effect.

\begin{table}[H]
  \centering
  \small
\begin{tabular}{llrrr}
\toprule
Setting & Method & Gap recovered (\%) & Gradient energy (\%) & $n$ \\
\midrule
Vision & Semantic KPSA & 34.7 [23.7, 47.1] & 71.8 [68.9, 74.7] & 169 \\
Vision & Categorical KPSA & 39.4 [28.8, 50.9] & 75.8 [73.1, 78.4] & 169 \\
Vision & Semantic nearest & 28.1 [16.4, 40.5] & 69.5 [66.2, 72.6] & 169 \\
Vision & Constant sensitivity & 0.0 [0.0, 0.0] & 46.8 [45.1, 48.4] & 169 \\
Vision & Direct gradient & 100.0 [100.0, 100.0] & 100.0 [100.0, 100.0] & 169 \\
\addlinespace[2pt]
ETTh1 & Semantic KPSA & 46.5 [34.7, 56.2] & 72.9 [69.3, 76.2] & 128 \\
ETTh1 & Categorical KPSA & 2.2 [-4.4, 8.3] & 55.4 [50.6, 60.2] & 128 \\
ETTh1 & Semantic nearest & 52.4 [36.0, 65.2] & 69.2 [64.7, 73.5] & 128 \\
ETTh1 & Constant sensitivity & 0.0 [0.0, 0.0] & 55.4 [50.9, 59.8] & 128 \\
ETTh1 & Direct gradient & 100.0 [100.0, 100.0] & 100.0 [100.0, 100.0] & 128 \\
\addlinespace[2pt]
Weather & Semantic KPSA & 50.2 [34.0, 66.4] & 74.6 [70.6, 78.2] & 128 \\
Weather & Categorical KPSA & 5.6 [-2.4, 14.0] & 56.4 [52.2, 60.6] & 128 \\
Weather & Semantic nearest & 46.7 [31.7, 59.1] & 72.0 [68.0, 76.0] & 128 \\
Weather & Constant sensitivity & 0.0 [0.0, 0.0] & 56.3 [52.0, 60.7] & 128 \\
Weather & Direct gradient & 100.0 [100.0, 100.0] & 100.0 [100.0, 100.0] & 128 \\
\addlinespace[2pt]
Protein & Semantic KPSA & 33.2 [17.5, 55.5] & 42.8 [29.7, 57.0] & 48 \\
Protein & Categorical KPSA & 5.4 [-0.2, 13.0] & 12.6 [3.9, 25.0] & 48 \\
Protein & Semantic nearest & 30.8 [16.5, 50.0] & 39.0 [25.7, 53.5] & 48 \\
Protein & Constant sensitivity & 0.0 [0.0, 0.0] & 7.2 [3.2, 12.9] & 48 \\
Protein & Direct gradient & 100.0 [100.0, 100.0] & 100.0 [100.0, 100.0] & 48 \\
\bottomrule
\end{tabular}   \caption{Parameter-influence results at a 0.1\% eligible-parameter budget. Entries are query means with paired 95\% bootstrap intervals. Gap recovery uses constant sensitivity and the direct query gradient as the 0\% and 100\% anchors. Gradient energy is measured relative to the direct query gradient.}
  \label{tab:application2-operating-point}
\end{table}

\FloatBarrier

\subsection{Cold-query and qualitative evidence}

\begin{figure}[H]
  \centering
  \includegraphics[width=.80\linewidth]{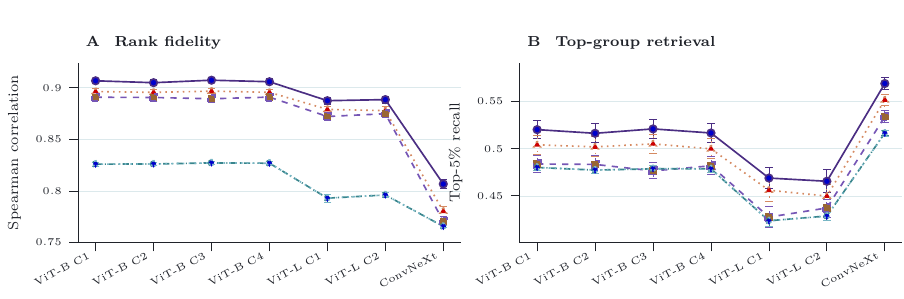}
  \caption{\textbf{Cold retrieval of held-out sensitivity profiles across architectures and confirmation splits.} Local semantic KPSA, semantic nearest retrieval, categorical KPSA, constant sensitivity, and a random-projection control are compared by rank-based profile metrics on ViT-B/16, ViT-L/16, and ConvNeXtV2-T.}
  \label{fig:cold-retrieval}
\end{figure}

\FloatBarrier

\begin{figure}[H]
  \centering
  \includegraphics[width=.84\linewidth]{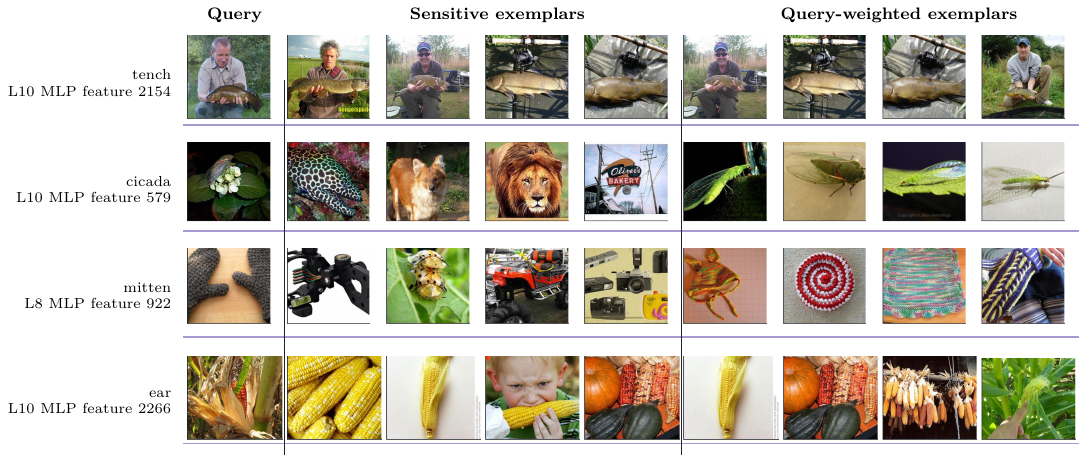}
  \caption{\textbf{Qualitative exemplars of input-conditioned parameter-influence circuits.} ImageNet queries are paired with unconditioned sensitive source examples, query-weighted semantic neighbors, and the associated selected MLP feature groups. The examples illustrate the local visual neighborhoods summarized quantitatively in Figure~\ref{fig:application2-circuits} and are not standalone causal evidence.}
  \label{fig:circuit-exemplars}
\end{figure}

\FloatBarrier
\section{Additional Neuron-Localization Evidence}
\label{app:application1}

\subsection{Numerical results and measured cost}

\begin{table}[H]
  \centering
  \small
\begin{tabular}{llrr}
\toprule
Setting & Method & Attribution gap recovered (\%) & $n$ \\
\midrule
Vision & Semantic KPSA & 66.5 [51.4, 84.1] & 76 \\
Vision & Categorical KPSA & 76.5 [62.8, 92.5] & 76 \\
Vision & Semantic nearest & 74.8 [61.4, 90.1] & 76 \\
Vision & Constant sensitivity & 0.0 [0.0, 0.0] & 76 \\
Vision & Activation attribution & 100.0 [100.0, 100.0] & 76 \\
\addlinespace[2pt]
Protein & Semantic KPSA & 68.3 [46.3, 98.5] & 128 \\
Protein & Categorical KPSA & 4.4 [-0.7, 11.1] & 128 \\
Protein & Semantic nearest & 56.6 [35.1, 86.5] & 128 \\
Protein & Constant sensitivity & 0.0 [0.0, 0.0] & 128 \\
Protein & Activation attribution & 100.0 [100.0, 100.0] & 128 \\
\bottomrule
\end{tabular}   \caption{Neuron-localization results at a 0.1\% eligible-parameter budget. Entries are query means with paired 95\% bootstrap intervals. Constant sensitivity and action-matched activation attribution define the 0\% and 100\% anchors.}
  \label{tab:application1-operating-point}
\end{table}

\begin{table}[H]
  \centering
  \small
\begin{tabular}{lrrrr}
\toprule
Method and budget & Gap recovered (\%) & Images/s & Peak GiB & Atlas MiB \\
\midrule
Activation attribution & 100.0 & 551 & 4.24 & 0.0 \\
Semantic KPSA, 0.02\% & 64.1 & 2,041 & 0.41 & 112.5 \\
Semantic KPSA, 0.1\% & 80.9 & 2,041 & 0.41 & 112.5 \\
Semantic KPSA, 32-feature groups, 1/12 MLP parameters & 59.3 & 2,088 & 0.41 & 3.5 \\
\bottomrule
\end{tabular}   \caption{Measured vision accuracy--cost points from Figure~\ref{fig:application1-efficiency}. Throughput and peak memory use batch size 32 on one RTX 5090. Atlas storage is persistent float32 storage and excludes model weights. GiB and MiB use powers of two. The two feature-atlas rows share the same query cost because only the final selection budget changes.}
  \label{tab:application1-efficiency}
\end{table}

\FloatBarrier

\subsection{Ranking and resolution diagnostics}

\begin{figure}[H]
  \centering
  \includegraphics[width=.82\linewidth]{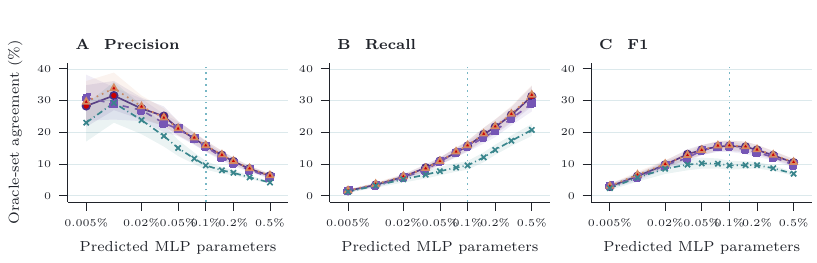}
  \caption{\textbf{Ranking precision--recall tradeoff against a fixed action-matched oracle set.} The oracle retains the top 0.1\% of ViT MLP parameters by positive gradient-times-activation score, while the predicted selection budget varies. Panels report parameter-mass-weighted precision, recall, and F1 for semantic KPSA, categorical KPSA, semantic nearest retrieval, and constant sensitivity. Oracle self-retrieval is 100\% by construction and is stated rather than plotted so the approximate methods remain legible. This fixed-oracle diagnostic is separate from the matched-budget overlap sweep used to select the 0.02\%, 0.05\%, and 0.1\% causal operating points.}
  \label{fig:budget-prf}
\end{figure}

\FloatBarrier

\begin{figure}[H]
  \centering
  \includegraphics[width=.82\linewidth]{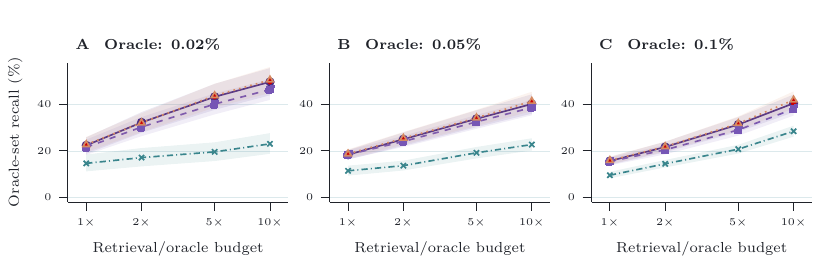}
  \caption{\textbf{Expanded retrieval budgets increase oracle-set recall for candidate generation.} Each panel fixes an activation-attribution oracle set at 0.02\%, 0.05\%, or 0.1\% of ViT MLP parameters, then retrieves $1\times$, $2\times$, $5\times$, or $10\times$ as many parameters. Semantic KPSA recalls 49.7\%, 40.2\%, and 40.5\% of the respective oracle sets at $10\times$, compared with 23.0\%, 22.7\%, and 28.4\% for constant sensitivity; oracle self-retrieval is 100\% by construction. Recall is primary because missed oracle groups cannot enter exact follow-up analysis, whereas false positives consume additional follow-up budget; the multiplier and absolute oracle percentage make that cost explicit. Precision and F1 are retained in the source table.}
  \label{fig:budget-recall-multiples}
\end{figure}

\begin{figure}[H]
  \centering
  \includegraphics[width=.82\linewidth]{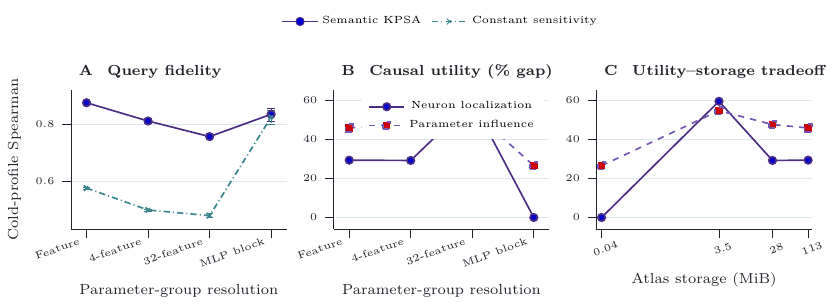}
  \caption{\textbf{Parameter-group resolution controls fidelity, causal utility, and storage.} Feature, 4-feature, 32-feature, and whole-MLP-block partitions are evaluated on cold-query fidelity and both causal applications. The final panel places causal utility against persistent atlas size. Intervals are 95\% confidence intervals.}
  \label{fig:parameter-resolution}
\end{figure}

\FloatBarrier
 

\begin{thebibliography}{10}

\bibitem{achille2019task2vec}
Alessandro Achille, Michael Lam, Rahul Tewari, Avinash Ravichandran, Subhransu Maji, Charless~C. Fowlkes, Stefano Soatto, and Pietro Perona.
\newblock {Task2Vec}: Task embedding for meta-learning.
\newblock In {\em IEEE/CVF International Conference on Computer Vision}, pages 6429--6438, 2019.

\bibitem{achille2021information}
Alessandro Achille, Giovanni Paolini, Glen Mbeng, and Stefano Soatto.
\newblock The information complexity of learning tasks, their structure and their distance.
\newblock {\em Information and Inference: A Journal of the IMA}, 10(1):51--72, 2021.

\bibitem{ansari2024chronos}
Abdul~Fatir Ansari, Lorenzo Stella, Caner Turkmen, et~al.
\newblock Chronos: Learning the language of time series.
\newblock {\em Transactions on Machine Learning Research}, 2024.

\bibitem{ash2020deep}
Jordan~T. Ash, Chicheng Zhang, Akshay Krishnamurthy, John Langford, and Alekh Agarwal.
\newblock Deep batch active learning by diverse, uncertain gradient lower bounds.
\newblock In {\em International Conference on Learning Representations}, 2020.

\bibitem{chatalic2022nystrom}
Antoine Chatalic, Nicolas Schreuder, Lorenzo Rosasco, and Alessandro Rudi.
\newblock Nystr{\"o}m kernel mean embeddings.
\newblock In {\em Proceedings of the 39th International Conference on Machine Learning}, volume 162 of {\em Proceedings of Machine Learning Research}, pages 3006--3024. PMLR, 2022.

\bibitem{conmy2023automated}
Arthur Conmy, Augustine~N. Mavor-Parker, Aengus Lynch, Stefan Heimersheim, and Adri\`a Garriga-Alonso.
\newblock Towards automated circuit discovery for mechanistic interpretability.
\newblock In {\em Advances in Neural Information Processing Systems}, volume~36, 2023.

\bibitem{dai2022knowledge}
Damai Dai, Li~Dong, Yaru Hao, Zhifang Sui, Baobao Chang, and Furu Wei.
\newblock Knowledge neurons in pretrained transformers.
\newblock In {\em Proceedings of the 60th Annual Meeting of the Association for Computational Linguistics}, pages 8493--8502, 2022.

\bibitem{deng2009imagenet}
Jia Deng, Wei Dong, Richard Socher, Li-Jia Li, Kai Li, and Li~Fei-Fei.
\newblock {ImageNet}: A large-scale hierarchical image database.
\newblock In {\em IEEE Conference on Computer Vision and Pattern Recognition}, pages 248--255, 2009.

\bibitem{dosovitskiy2021image}
Alexey Dosovitskiy, Lucas Beyer, Alexander Kolesnikov, et~al.
\newblock An image is worth 16x16 words: Transformers for image recognition at scale.
\newblock In {\em International Conference on Learning Representations}, 2021.

\bibitem{jaakkola1999exploiting}
Tommi~S. Jaakkola and David Haussler.
\newblock Exploiting generative models in discriminative classifiers.
\newblock In {\em Advances in Neural Information Processing Systems}, volume~11, pages 487--493, 1999.

\bibitem{jacot2018neural}
Arthur Jacot, Franck Gabriel, and Cl{\'e}ment Hongler.
\newblock Neural tangent kernel: Convergence and generalization in neural networks.
\newblock In {\em Advances in Neural Information Processing Systems}, volume~31, 2018.

\bibitem{jethani2022fastshap}
Neil Jethani, Mukund Sudarshan, Ian Covert, Su-In Lee, and Rajesh Ranganath.
\newblock {FastSHAP}: Real-time shapley value estimation.
\newblock In {\em International Conference on Learning Representations}, 2022.

\bibitem{koh2017understanding}
Pang~Wei Koh and Percy Liang.
\newblock Understanding black-box predictions via influence functions.
\newblock In {\em Proceedings of the 34th International Conference on Machine Learning}, volume~70 of {\em Proceedings of Machine Learning Research}, pages 1885--1894. PMLR, 2017.

\bibitem{leng2025towards}
Yongqi Leng and Deyi Xiong.
\newblock Towards understanding multi-task learning (generalization) of {LLM}s via detecting and exploring task-specific neurons.
\newblock In {\em Proceedings of the 31st International Conference on Computational Linguistics}, pages 2969--2987, 2025.

\bibitem{lin2023evolutionary}
Zeming Lin, Halil Akin, Roshan Rao, et~al.
\newblock Evolutionary-scale prediction of atomic-level protein structure with a language model.
\newblock {\em Science}, 379(6637):1123--1130, 2023.

\bibitem{marks2025sparse}
Samuel Marks, Can Rager, Eric~J. Michaud, Yonatan Belinkov, David Bau, and Aaron Mueller.
\newblock Sparse feature circuits: Discovering and editing interpretable causal graphs in language models.
\newblock In {\em International Conference on Learning Representations}, 2025.

\bibitem{mueller2025mib}
Aaron Mueller, Atticus Geiger, Sarah Wiegreffe, Dana Arad, Iv{\'a}n Arcuschin, Adam Belfki, Yik~Siu Chan, Jaden~Fried Fiotto-Kaufman, Tal Haklay, Michael Hanna, Jing Huang, Rohan Gupta, Yaniv Nikankin, Hadas Orgad, Nikhil Prakash, Anja Reusch, Aruna Sankaranarayanan, Shun Shao, Alessandro Stolfo, Martin Tutek, Amir Zur, David Bau, and Yonatan Belinkov.
\newblock {MIB}: A mechanistic interpretability benchmark.
\newblock In {\em Proceedings of the 42nd International Conference on Machine Learning}, volume 267 of {\em Proceedings of Machine Learning Research}, pages 45069--45108. PMLR, 2025.

\bibitem{nadaraya1964estimating}
Elizbar~A. Nadaraya.
\newblock On estimating regression.
\newblock {\em Theory of Probability and Its Applications}, 9(1):141--142, 1964.

\bibitem{oglic2017nystrom}
Dino Oglic and Thomas G{\"a}rtner.
\newblock Nystr{\"o}m method with kernel k-means++ samples as landmarks.
\newblock In {\em Proceedings of the 34th International Conference on Machine Learning}, volume~70 of {\em Proceedings of Machine Learning Research}, pages 2652--2660. PMLR, 2017.

\bibitem{oquab2023dinov2}
Maxime Oquab, Timoth{\'e}e Darcet, Th{\'e}o Moutakanni, et~al.
\newblock {DINOv2}: Learning robust visual features without supervision.
\newblock {\em Transactions on Machine Learning Research}, 2024.

\bibitem{pham2013fast}
Ninh Pham and Rasmus Pagh.
\newblock Fast and scalable polynomial kernels via explicit feature maps.
\newblock In {\em Proceedings of the 19th ACM SIGKDD International Conference on Knowledge Discovery and Data Mining}, pages 239--247, 2013.

\bibitem{pruthi2020estimating}
Garima Pruthi, Frederick Liu, Satyen Kale, and Mukund Sundararajan.
\newblock Estimating training data influence by tracing gradient descent.
\newblock In {\em Advances in Neural Information Processing Systems}, volume~33, 2020.

\bibitem{song2024large}
Ran Song, Shizhu He, Shuting Jiang, Yantuan Xian, Shengxiang Gao, Kang Liu, and Zhengtao Yu.
\newblock Does large language model contain task-specific neurons?
\newblock In {\em Proceedings of the 2024 Conference on Empirical Methods in Natural Language Processing}, pages 7101--7113, 2024.

\bibitem{steiner2022train}
Andreas Steiner, Alexander Kolesnikov, Xiaohua Zhai, Ross Wightman, Jakob Uszkoreit, and Lucas Beyer.
\newblock How to train your {ViT}? data, augmentation, and regularization in vision transformers.
\newblock {\em Transactions on Machine Learning Research}, 2022.

\bibitem{sweeney2026geometry}
John Sweeney.
\newblock The geometry of updates: Fisher alignment at vocabulary scale.
\newblock In {\em Proceedings of the 43rd International Conference on Machine Learning}, 2026.

\bibitem{syed2024attribution}
Aaquib Syed, Can Rager, and Arthur Conmy.
\newblock Attribution patching outperforms automated circuit discovery.
\newblock In {\em Proceedings of the 7th BlackboxNLP Workshop}, pages 407--416, 2024.

\bibitem{uniprot2025}
{The UniProt Consortium}.
\newblock {UniProt}: The universal protein knowledgebase in 2025.
\newblock {\em Nucleic Acids Research}, 53(D1):D609--D617, 2025.

\bibitem{tsybakov2009introduction}
Alexandre~B. Tsybakov.
\newblock {\em Introduction to Nonparametric Estimation}.
\newblock Springer Series in Statistics. Springer, New York, 2009.

\bibitem{wang2026heads}
Qidong Wang, Junjie Hu, and Ming Jiang.
\newblock From heads to neurons: Causal attribution and steering in multi-task vision--language models.
\newblock In {\em Findings of the Association for Computational Linguistics: ACL 2026}, pages 36151--36175, 2026.

\bibitem{watson1964smooth}
Geoffrey~S. Watson.
\newblock Smooth regression analysis.
\newblock {\em Sankhy{\=a}: The Indian Journal of Statistics, Series A}, 26(4):359--372, 1964.

\bibitem{wightman2019timm}
Ross Wightman.
\newblock Pytorch image models.
\newblock GitHub repository, 2019.

\bibitem{wu2021autoformer}
Haixu Wu, Jiehui Xu, Jianmin Wang, and Mingsheng Long.
\newblock Autoformer: Decomposition transformers with auto-correlation for long-term series forecasting.
\newblock In {\em Advances in Neural Information Processing Systems}, volume~34, 2021.

\bibitem{zhang2022neural}
Ruixiang Zhang, Shuangfei Zhai, Etai Littwin, and Joshua~M. Susskind.
\newblock Learning representation from neural fisher kernel with low-rank approximation.
\newblock In {\em International Conference on Learning Representations}, 2022.

\bibitem{zhou2021informer}
Haoyi Zhou, Shanghang Zhang, Jieqi Peng, Shuai Zhang, Jianxin Li, Hui Xiong, and Wancai Zhang.
\newblock Informer: Beyond efficient transformer for long sequence time-series forecasting.
\newblock In {\em AAAI Conference on Artificial Intelligence}, volume~35, pages 11106--11115, 2021.

\end{thebibliography}
\end{document}